\renewcommand{\citet}{\citep}
\title{Parameter-free Algorithms for the Stochastically Extended Adversarial Model}
\author{
  Shuche Wang$^{1}$ \quad
  Adarsh Barik$^{2}$ \quad
  Peng Zhao$^{3,4}$ \quad
  Vincent Y.~F. Tan$^{1,5,6}$\\[4pt]
  \small $^{1}$ Institute of Operations Research and Analytics, National University of Singapore \\
  \small $^{2}$ Department of Computer Science and Engineering, Indian Institute of Technology Delhi \\
  \small $^{3}$ National Key Laboratory for Novel Software Technology, Nanjing University \\
  \small $^{4}$ School of Artificial Intelligence, Nanjing University \\
  \small $^{5}$ Department of Mathematics, National University of Singapore \\
  \small $^{6}$ Department of Electrical and Computer Engineering, National University of Singapore \\[3pt]
    \small \texttt{shuche.wang@u.nus.edu, adarshbarik1@iitd.ac.in,}\\[-2pt]
    \small \texttt{zhaop@lamda.nju.edu.cn, vtan@nus.edu.sg}
}
\definecolor{LightCyan}{rgb}{0.8, 0.9, 1}
\theoremstyle{plain}
\newtheorem{theorem}{Theorem}[section]
\newtheorem{proposition}[theorem]{Proposition}
\newtheorem{lemma}[theorem]{Lemma}
\theoremstyle{definition}
\newtheorem{assumption}[theorem]{Assumption}
\theoremstyle{remark}
\newtheorem{remark}[theorem]{Remark}
\DeclareMathOperator*{\argmin}{arg\,min}
\def \Yes {\ding{51}}
\def \No {\ding{55}}
\newcommand{\cA}{\mathcal{A}}
\newcommand{\cD}{\mathcal{D}}
\newcommand{\cE}{\mathcal{E}}
\newcommand{\cG}{\mathcal{G}}
\newcommand{\cM}{\mathcal{M}}
\newcommand{\cS}{\mathcal{S}}
\newcommand{\cX}{\mathcal{X}}
\newcommand{\cZ}{\mathcal{Z}}
\renewcommand{\Bbb}{\mathbb}
\newcommand{\R}{{\Bbb R}}
\newcommand{\E}{{\Bbb E}}
\newcommand{\bcomment}[1]{\textcolor{blue}{#1}}
\newcommand{\oons}{{\sc OONS}}
\newcommand{\cawe}{{\sc CA-OONS}}
\newcommand{\claoons}{{\sc CLA-OONS}}
\newcommand{\fpfons}{{\sc FPF-OONS}}
\newcommand{\regret}{\mathfrak{R}}
\newcommand{\real}{\mathbb{R}}
\newcommand{\inner}[2]{\left\langle #1, #2 \right\rangle}
\newcommand{\adnote}[1]{{\color{blue!80!green!20!red} Adarsh: \emph{#1}}}
\DeclareMathOperator{\tr}{trace}
\renewcommand{\tilde}{\widetilde}
\begin{document}

\maketitle

\begin{abstract}
We develop the first parameter-free algorithms for the Stochastically Extended Adversarial (SEA) model, a framework that bridges adversarial and stochastic online convex optimization. Existing approaches for the SEA model require prior knowledge of problem-specific parameters, such as the diameter of the domain $D$ and the Lipschitz constant of the loss functions $G$, which limits their practical applicability. Addressing this, we develop parameter-free methods by leveraging the Optimistic Online Newton Step (OONS) algorithm to eliminate the need for these parameters. We first establish a comparator-adaptive algorithm for the scenario with unknown domain diameter but known Lipschitz constant, achieving an expected regret bound of $\tilde{O}\big(\|u\|_2^2 + \|u\|_2(\sqrt{\sigma^2_{1:T}} + \sqrt{\Sigma^2_{1:T}})\big)$, where $u$ is the comparator vector and $\sigma^2_{1:T}$ and $\Sigma^2_{1:T}$ represent the cumulative stochastic variance and cumulative adversarial variation, respectively. We then extend this to the more general setting where both $D$ and $G$ are unknown, attaining the comparator- and Lipschitz-adaptive algorithm. Notably, the regret bound exhibits the same dependence on $\sigma^2_{1:T}$ and $\Sigma^2_{1:T}$, demonstrating the efficacy of our proposed methods even when both parameters are unknown in the SEA model. 
\end{abstract}

\section{Introduction}
We focus on online convex optimization (OCO)~\cite{cesa2006prediction,hazan2016introduction,orabona2019modern}, a broad framework for sequential decision-making. In each round $t \in [T]$, a learner chooses a point $x_t$ from a convex set $\cX \subseteq \real^d$. The environment then discloses a convex function $f_t : \cX \to \real$, after which the learner incurs a loss $f_t(x_t)$ and updates their decision. The standard way to show the performance is via the \emph{regret}, the total loss relative to a comparator $u\in\cX$, defined as $\regret_T(u) = \sum_{t=1}^T f_t(x_t) - \sum_{t=1}^T f_t(u)$.

For convex problems, the regret can be bounded by $O(\sqrt{T})$~\cite{zinkevich2003online}, which is known to be minimax optimal~\cite{abernethy2008optimal}. OCO encompasses two primary frameworks: adversarial OCO~\cite{zinkevich2003online,hazan2007logarithmic}, which aims to minimize regret against arbitrarily chosen loss functions, and stochastic OCO (SCO)~\cite{hazan2007logarithmic,yu2017online}, which minimizes excess risk under i.i.d.\ losses. While both frameworks are well-studied, real-world scenarios typically fall between these theoretical extremes of purely adversarial or stochastic settings. The Stochastically Extended Adversarial (SEA) model proposed in~\citet{sachs2023accelerated} bridges the gap between traditional adversarial and stochastic frameworks in OCO. This hybrid approach serves as an intermediate formulation that captures aspects of both adversarial OCO and SCO settings.

Optimal performance in OCO, SCO, and SEA models typically relies on careful step-size tuning, which requires prior knowledge of problem parameters such as the diameter of the decision set and Lipschitz constants. However, these parameters are often unknown in practice, motivating the development of \emph{parameter-free} algorithms that achieve comparable regret without requiring such oracle information. Specifically, parameter-free algorithms include \emph{comparator-adaptive} algorithms (unknown diameter $D$) and \emph{Lipschitz-adaptive} algorithms (unknown Lipschitz constant $G$).  A related challenge arises when the decision set $\cX$ is unbounded, allowing adversaries to induce arbitrarily large losses for linear functions. Traditional methods often circumvent this by assuming bounded domains, where $\sup_{x,y\in\cX}\|x-y\|_2\leq D$. Consequently, developing OCO algorithms that remain effective under both unknown parameters and unbounded domains is significantly more challenging than in classical settings~\cite{cutkosky2018black,chen2021impossible,jacobsen2023unconstrained}.

To address these challenges, we propose ``parameter-free'' algorithms for the SEA model, accommodating potentially unbounded decision sets. Using the Optimistic Online Newton Step as our base algorithm, we systematically relax assumptions: first tackling the case of an unknown domain diameter $D$ (potentially infinite) with a known Lipschitz constant $G$, and then extending to the more complex scenario where both $D$ and $G$ are unknown. In the SEA model, at each time step $t$, the learner selects a distribution $\cD_t$ over functions and incurs a loss $f_t(x_t)$, where $f_t$ is sampled from $\cD_t$. The \textit{expected gradient} is denoted as $\nabla F_t(x) = \mathbb{E}_{f_t \sim \mathcal{D}_t}[\nabla f_t(x)]$. 

\newcolumntype{g}{>{\columncolor{LightCyan}}c}
{\scriptsize 
\begin{table*}[t!]
\centering
\caption{Comparison of the regret bounds of existing results and our proposed algorithms.}
\label{tab:contri}
\resizebox{\columnwidth}{!}{%
\begin{tabular}{gggg}
\hline
\toprule
\rowcolor{white} \textbf{Algorithm} & \textbf{Free of $D$} & \textbf{Free of $G$} & \textbf{Bound on Expected Regret} $\mathbb{E}[\regret_T(u) ]$ \\
\midrule
\rowcolor{white} OFTLR,~OMD & & & \\
\rowcolor{white} \scriptsize{(Sachs et al.~\cite{sachs2023accelerated},~Chen et al.~\cite{chen2024optimistic})} & \multirow{-2}{*}{\No} & \multirow{-2}{*}{\No} & \multirow{-2}{*}{$O\Big(\sqrt{\sigma_{1:T}^2}+\sqrt{\Sigma_{1:T}^2}\Big)$} \\
\rowcolor{white} \textbf{\oons} & & & \\
\rowcolor{white} \scriptsize{(Theorem~\ref{thm:knownDG})} & \multirow{-2}{*}{\No} & \multirow{-2}{*}{\No} & \multirow{-2}{*}{$\tilde{O}\Big(\sqrt{\sigma_{1:T}^2}+\sqrt{\Sigma_{1:T}^2}\Big)$} \\
\textbf{\cawe} & & & \\
\scriptsize{(Theorem~\ref{thm:unknownD})} & \multirow{-2}{*}{\Yes} & \multirow{-2}{*}{\No} & \multirow{-2}{*}{$\tilde{O}\Big(\|u\|_2^2 + \|u\|_2(\sqrt{\sigma^2_{1:T}} + \sqrt{\Sigma^2_{1:T}})\Big)$} \\
\textbf{\claoons} & & & \\
\scriptsize{(Theorem~\ref{thm:fullyfree})} & \multirow{-2}{*}{\Yes} & \multirow{-2}{*}{\Yes} & \multirow{-2}{*}{$\tilde{O}\Big(\|u\|_2^2(\sqrt{\sigma_{1:T}^2}+\sqrt{\Sigma_{1:T}^2})+\|u\|_2^4+\sqrt{\sigma_{1:T} + \mathfrak{G}_{1:T}}\Big)$} \\
\bottomrule
\end{tabular}%
}
\end{table*}}

\paragraph{Main Contributions.} Our main results and contributions are summarized as follows.
\begin{enumerate}[leftmargin=*, label=(\arabic*)]
\item[(1)] We begin by introducing the Optimistic Online Newton Step (OONS) as our foundational algorithm. The \oons\ algorithm is inspired by \citet{chen2021impossible}; however, we incorporate an adaptive step-size $\eta_t$ rather than a fixed step-size throughout the learning process. When the parameters $D$ and $G$ are known, we demonstrate that \oons\ achieves an expected regret bound of $\tilde{O}(\sqrt{\sigma^2_{1:T}} + \sqrt{\Sigma^2_{1:T}})$, matching the state-of-the-art results in terms of dependence on the cumulative stochastic variance $\sigma^2_{1:T}$ and the cumulative adversarial variation $\Sigma^2_{1:T}$~\cite{sachs2023accelerated, chen2024optimistic}. This establishes a solid foundation for our subsequent extensions to parameter-free algorithms.

\item[(2)] We introduce the first parameter-free (comparator-adaptive) algorithm for the SEA model that remains effective when the domain diameter $D$ is unknown, provided the Lipschitz constant $G$ is known. This is achieved through a meta-framework wherein each base learner operates within a distinct bounded domain, complemented by the Multi-scale Multiplicative-Weight with Correction (MsMwC) algorithm~\cite{chen2021impossible} for the meta-algorithm's weight updates. This construction yields an expected regret bound of $\tilde{O}(\|u\|_2^2 + \|u\|_2(\sqrt{\sigma^2_{1:T}} + \sqrt{\Sigma^2_{1:T}}))$, where the bound scales with the $\ell_2$-norm of the comparator $u$ without requiring prior knowledge of the domain diameter~$D$.

\item[(3)] We further consider a setting in which {\em both} the domain diameter $D$ and the Lipschitz constant $G$ are unknown. By devising appropriate update rules for the estimation of the domain diameter, we establish an expected regret bound of
$\tilde{O}(\|u\|_2^2(\sqrt{\sigma_{1:T}^2}+\sqrt{\Sigma_{1:T}^2})+\|u\|_2^4+\sqrt{\sigma_{1:T} + \mathfrak{G}_{1:T} })$
where $\sigma_{1:T}$ captures the deviation of the stochastic gradients (excluding squared norms), and $\mathfrak{G}_{1:T}$ denotes the sum of the maximum expected gradients over the sequence.

\end{enumerate}
A summary of our results and the best existing results are included in Table~\ref{tab:contri}. Due to space limitations, we hide the Lipschitz constant $G$ in the $\tilde{O}(\cdot)$-notation in the regret bound of \claoons\ algorithm.

\subsection{Related Work} 
The SEA model~\cite{sachs2023accelerated} is motivated by foundational insights from the \emph{gradual-variation online learning}. The study of gradual variation can be traced back to the works of \citet{hazan2010extracting} and \citet{chiang2012online}, and it has gained significant traction in recent years~\cite{zhao2020dynamic,yan2023universal,zhao2024adaptivity,xie2024gradient,yan2024simple}. Notably, the SEA model has emerged as a practical application of the gradual variation principle~\cite{yan2023universal,xie2024gradient,yan2024simple}. Furthermore, this model serves as a bridge between adversarial OCO and SCO. This intermediate framework is comprehensively understood in the context of expert prediction~\cite{amir2020prediction,ito2021optimal} and the bandit setting~\cite{zimmert2019optimal,lee2021achieving}.

Parameter-free online learning has emerged as a fundamental advancement in machine learning, offering solutions to the critical challenge of parameter tuning in practice. In the baseline scenario, when both the diameter parameter $D$ and the gradient bound $G$ are known, algorithms leveraging Follow the Regularized Leader or Mirror Descent principles achieve the minimax optimal regret bound of  $\regret_T(u)\leq O(GD\sqrt{T})$~\cite{hazan2016introduction}. The field has subsequently progressed to address more practical scenarios where complete parameter knowledge is unavailable. Notably, in the Lipschitz-adaptive setting, it is still possible to attain the same optimal regret bound, differing only by constant factors~\cite{orabona2016coin,cutkosky2019artificial}. Xie et al.~\citet{xie2024gradient} extended these principles to gradient-variation online learning.

In the comparator-adaptive setting, the online learning problem becomes substantially more challenging due to the unknown comparator's magnitude, which could cause the algorithm's predictions to significantly deviate from the optimal solution, leading to a large regret. 
For this challenging scenario, a key result has been established as $\regret_T(u)\leq \tilde{O}(\|u\|_2G\sqrt{T})$~\cite{foster2015adaptive,orabona2016coin,cutkosky2018black,jacobsen2022parameter}. For scenarios where both parameters $D$ and $G$ are unknown, significant progress has been made recently. Cutkosky~\citet{cutkosky2019artificial} developed an algorithm with $\regret_T(u)\leq \tilde{O}(G\|u\|_2^3+\|u\|_2 G\sqrt{T})$, while  Mhammedi~\&~Koolen~\citet{mhammedi2020lipschitz} achieve $\regret_T(u)\leq \tilde{O}(G\|u\|_2^3+G\sqrt{\max_{t\leq T}(\sum_{s=1}^t\|g_s\|_2/\max_{s\leq t}\|g_s\|_2}))$. An alternative approach by \citet{orabona2018scale} presented the regret bound $\regret_T(u)\leq \tilde{O}(\|u\|_2^2G\sqrt{T})$. More recent advances including  \citet{jacobsen2023unconstrained} achieve the regret $\regret_T(u)\leq \tilde{O}(G\|u\|_2\sqrt{T}+L\|u\|_2^2\sqrt{T})$ under the condition that subgradients satisfy $\|g_t\|_2\leq G+L\|x_t\|_2$.  Cutkosky \& Mhammedi~\citet{cutkosky2024fully} further improve it to $\tilde{O}(G\|u\|_2\sqrt{T}+\|u\|_2^2+G^2)$.

Besides  parameter-free algorithms for OCO, \citet{ivgi2023dog} and \citet{khaled2023dowg} studied the parameter-free stochastic gradient descent (SGD) algorithms. Khaled et al.~\citet{khaled2024tuning} introduced the concept of ``tuning-free'' algorithms, which achieve performance comparable to optimally-tuned SGD within polylogarithmic factors, requiring only approximate estimates of the relevant problem parameters.


Although this series of works on parameter-free algorithms in OCO provides valuable insights, these approaches cannot be directly applied to attain the optimal regret bounds for the SEA model without prior knowledge of parameters. This limitation stems from the fact that the desired bounds for the SEA model should be expressed in terms of the variance-like quantities $\sigma^2_{1:T}$ and $\Sigma^2_{1:T}$, rather than the time horizon $T$. While Sachs et al.~\citet{sachs2023accelerated} have attempted to address this issue by proposing an algorithm that adapts to an unknown strong convexity parameter, their step-size search range still depends on both $D$ and $G$, thereby restricting its fully parameter-free adaptivity.

\section{Problem Setup and Preliminaries}\label{sec:setup}
In this section, we formulate the problem setup of the Stochastically Extended Adversarial (SEA) model, present the existing results, and discuss the key challenges.

\subsection{Problem Setup of the SEA Model}
In iteration $t \in [T]$, the learner selects a decision $x_t$ from a convex feasible domain $\cX \subseteq \mathbb{R}^d$, and nature chooses a distribution $\cD_t$ from a set of distributions over functions. Then, the learner suffers a loss $f_t(x_t)$, where  $f_t$ is a random function sampled from the distribution $\cD_t$. The distributions are allowed to vary over time, and by choosing them appropriately, the SEA model reduces to the adversarial OCO, SCO, or other intermediate settings. Additionally, for each $t \in [T]$, the {\em (conditional) expected function} is defined as $F_{t}(x) =\mathbb{E}_{f_t \sim \cD_t}[f_t(x)]$ and the {\em expected gradient} is defined as $\nabla F_{t}(x) =\mathbb{E}_{f_t \sim \cD_t}[\nabla f_t(x)]$. We define $\mathfrak{G}_t := \sup_{x \in \cX} \| \nabla F_{t}(x) \|_2$ to be the largest norm of the expected gradient, and use the shorthand $\mathfrak{G}_{1:T}$ to denote the sum $\sum_{t=1}^T \mathfrak{G}_t $.

Due to the randomness in the online decision-making process, our goal in the SEA model is to bound the \emph{expected regret} with respect to the randomness in the loss functions $f_t$ drawn from the distribution $\cD_t$ against any fixed comparator $u \in \cX$, defined as $\E[\regret_T(u)] \triangleq \E[ \sum_{t=1}^T f_t(x_t) - \sum_{t=1}^T f_t(u)]$.
To capture the characteristics of the SEA model, we introduce the following quantities. For each $t \in [T]$, define the {\em (conditional) variance of the gradients} and {\em cumulative stochastic variance} respectively as 
\begin{equation} \label{eqn:variance:grad}
\sigma_{t}^2 = \sup_{x \in \cX} \E_{f_t \sim \cD_t}  \big[ \| \nabla f_t(x) - \nabla F_{t}(x) \|_2^2 \big],\quad \sigma_{1:T}^2 = \E \left[ \sum_{t=1}^T \sigma_t^2 \right],
\end{equation}
which reflect  the stochasticity of the online process. Additionally, we introduce the concepts of stochastic gradient deviation and cumulative gradient deviation to characterize the stochastic variation of gradients, without the squared norm. The stochastic gradient deviation is defined as $\sigma_{t} = \sup_{x \in \cX} \E_{f_t \sim \cD_t}  \big[ \| \nabla f_t(x) - \nabla F_{t}(x) \|_2 \big]$,
and the {\em cumulative gradient deviation} is defined as $\sigma_{1:T} = \E \big[ \sum_{t=1}^T \sigma_t \big]$.
The {\em cumulative adversarial variation} is defined as
\begin{equation*} 
\Sigma_{1:T}^2= \E \left[ \sum_{t=1}^T \sup_{x \in \cX} \|\nabla F_t(x)- \nabla F_{t-1}(x)\|_2^2\right],
\end{equation*}
where $\nabla F_0(x) = 0$, reflecting the adversarial difficulty. This work aims to provide expected regret bounds that depend on problem-dependent quantities such as $\sigma^2_{1:T}, \Sigma^2_{1:T}$, and $\mathfrak{G}_{1:T}$ instead of $T$.

Below, we present several assumptions. Note that our results do not rely on \emph{all} of these assumptions; rather, specific assumptions are required for each result, which will be explicitly stated in the theorem.

  

\begin{assumption}[Boundedness of gradient norms]
\label{ass:G}
The gradient norms of all loss functions are bounded by $G$, i.e., $\max_{t\in[T]}\max_{x\in\cX}\|\nabla f_t(x)\|_2\leq G$.
\end{assumption}

\begin{assumption}[Boundedness of domain]
\label{ass:D}
The diameter of the convex set $\cX$ (the feasible domain) is bounded by $D$ i.e., $\sup_{x,y\in \cX}\|x-y\|_2\leq D$.
\end{assumption}

\begin{assumption}[Smoothness]
\label{ass:L}
For all $t\in [T]$, the expected function $F_t$ is $L$-smooth over $\cX$, i.e., $\|\nabla F_t(x)-\nabla F_t(y)\|_2\leq L\|x-y\|_2$ for all $x,y\in\cX$.
\end{assumption}

\begin{assumption}[Convexity]
\label{ass:convexF}
For all $t\in[T]$, the expected function $F_t$ is convex on $\cX$.
\end{assumption}

\textbf{Notations.} Given a positive definite matrix $A$, the   norm induced by $A$ is  $\|x\|_{A}=\sqrt{x^\top Ax}$.  $\Delta_N$ denotes the $(N-1)$-dimensional simplex. Let $\psi: \cX \to \mathbb{R}$ be a continuously differentiable and strictly convex function, the associated Bregman divergence is defined as $D_{\psi}(x, y) \coloneq \psi(x) - \psi(y) - \langle \nabla \psi(y), x-y \rangle$. The notation $O(\cdot)$ hides constants and $\tilde{O}(\cdot)$ additionally hides  $\mathrm{polylog}$ factors.

\subsection{Existing Results for the SEA Model}

\textbf{Bounded domain and gradient norm.}~~  Sachs et al.~\cite{sachs2023accelerated} established a regret bound for the SEA model using both Optimistic Follow-The-Regularized-Leader (OFTRL) and Optimistic Mirror Descent (OMD), given by $\mathbb{E}[\regret_T(u)]= O(\sqrt{\sigma_{1:T}^2}+\sqrt{\Sigma_{1:T}^2})$, achieved by setting the step-size as $\eta_t=\frac{D^2}{\sum_{s=1}^{t-1} \min\{\frac{\eta_s}{2} \|g_s - m_s\|_2^2, D \|g_s - m_s\|_2\}}$, where $g_t=\nabla f_t(x_t)$ and $m_t=g_{t-1}$. Similarly, Chen et al.~\cite{chen2024optimistic} derived the same bound by Optimistic Online Mirror Descent (OMD) with the step-size $\eta_t=\frac{D}{\sqrt{\delta+4G^2+\bar{V}_{t-1}}}$, where $\bar{V}_{t-1}=\sum_{s=1}^{t-1}\|g_s - m_s\|_2^2$ and $\delta>0$.

In all of the above settings, the optimal step-size $\eta_t$ is dependent on the parameters $D$ (the diameter of decision set $\cX$) and $G$ (Lipschitz constant), so there has been a natural motivation to develop algorithms that achieve similar regret bounds \emph{without} knowing such parameters \emph{a priori}. We term such algorithms as \emph{``parameter-free''} algorithms for the SEA model. 

\textbf{Parameter-free algorithm for the SEA model.}~~ Theorem 5 in \citet{jacobsen2022parameter} demonstrates that the parameter-free mirror descent algorithm can be extended to enjoy a gradient-variation regret of $\regret_T(u) \leq \tilde{O}(\|u\|_2 \sqrt{\sum_{t=1}^T \|\nabla f_t(x_t) - \nabla f_{t-1}(x_t)\|_2^2})$. In fact, this can directly yield an expected regret bound for the SEA model scaling with $\tilde{\sigma}_{1:T}^2 := \mathbb{E}\left[\sum_{t=1}^T \mathbb{E}_{f_t \sim \mathcal{D}_t} \left[\sup_x \|\nabla f_t(x) - \nabla F_t(x)\|_2^2\right]\right]$, i.e.,
\begin{equation}\label{eq:tildesigma}
\mathbb{E}[\regret_T(u)]\leq\tilde{O}\left(\|u\|_2 \Big(\sqrt{\tilde{\sigma}_{1:T}^2} + \sqrt{\Sigma_{1:T}^2} \Big)\right).
\end{equation}
Akin to $\sigma_{1:T}^2$, $\tilde{\sigma}_{1:T}^2$ defined in \cite{chen2024optimistic} also captures the stochastic nature of the SEA model. Furthermore, in the worst case, the bound in \eqref{eq:tildesigma} reduces to $\tilde{O}(\|u\|_2 \sqrt{T})$, matching the best available problem-independent bound. The outer expectation in the definition of $\tilde{\sigma}_{1:T}^2 $ accounts for the randomness in the choice of the distribution $\cD_t$ at each step. Refer to Appendix~\ref{app:tildesigma} for a self-contained proof of~\eqref{eq:tildesigma}.

\textbf{Key Challenge.}~~ However, we emphasize that our goal is to obtain regret bounds scaling with $\sigma^2_{1:T}$, as defined in~\eqref{eqn:variance:grad}. As pointed out in previous work on the SEA model~\citep[Remark~9]{chen2024optimistic}, $\sigma^2_{1:T}$ is more favorable than $\tilde{\sigma}^2_{1:T}$. First, from a mathematical perspective, the latter is generally larger due to the convexity of the supremum operator. The difference between $\sigma^2_{1:T}$ and $\tilde{\sigma}^2_{1:T}$ can, in fact, be arbitrarily large. The detailed comparison is provided in Appendix~\ref{app:separation}. Second, from an algorithmic perspective, an algorithm with a regret bound involving $\tilde{\sigma}^2_{1:T}$ typically involves an \emph{implicit} update, typically requires an \emph{implicit} update, which operates on the original function and is significantly more costly than standard first-order methods (see Remark~10 in~\cite{chen2024optimistic}). Third, achieving regret bounds scaling with $\sigma^2_{1:T}$ typically requires leveraging the \emph{Regret Bounded by Variation in Utilities (RVU)} property~\cite{syrgkanis2015fast}, which captures the regret to be bounded not only by the gradient variations but also an additional negative stability term. Formally, an algorithm satisfies the RVU property if its regret upper bound is in the form of $\sum_{t=1}^T \langle x^* - x_t, u_t \rangle \leq \alpha + \beta \sum_{t=1}^T \|u_t - u_{t-1}\|^2 - \gamma \sum_{t=1}^T \|x_t - x_{t-1}\|^2$, for some constants \(\alpha, \beta, \gamma > 0\). This structure enables finer control over the regret by explicitly analyzing trajectory stability, establishing profound connections to game theory~\cite{syrgkanis2015fast,zhang2022no} and accelerations in smooth optimization~\cite{LectureNote:AOpt25}. Consequently, the key challenge lies in how to achieve this preferred $\sigma^2_{1:T}$-scaling \emph{without} knowledge of $D$ and $G$ for unbounded domains.


\section{Optimistic Online Newton Step (ONS) for the SEA Model}\label{sec:ons}
In this section, different from the Optimistic follow-the-regularized-leader (OFTRL)~\citep{sachs2023accelerated} and Optimistic mirror descent (OMD)~\citep{chen2024optimistic}, we first introduce the \underline{O}ptimistic \underline{O}nline \underline{N}ewton \underline{S}tep (OONS) algorithm as the base algorithm for the   ``parameter-free'' algorithms to be introduced later. This algorithm is summarized in Algorithm~\ref{alg:ONS}.

\begin{algorithm}[t]
\caption{Optimistic Online Newton Step (OONS)}
\label{alg:ONS}

    \textbf{Input:} learning rate $\eta_t>0$, $x'_1=0$.
    
    \begin{algorithmic}[1]
    \FOR{$t=1,\ldots, T$}
        \STATE Receive optimistic prediction $m_t$ and range hint $z_t$.
        \STATE Update $x_t = \argmin_{x\in\cX}\{\inner{x}{m_t} + D_{\psi_{t}}(x,x_t')\}$  where $\psi_t(x) = \frac{1}{2}\|x\|_{A_t}^2$ and $A_t = 4 z_1^2 I+ \sum_{s=1}^{t-1} \eta_s(\nabla_s \!-\! m_s)(\nabla_s \!-\! m_s)^\top \!+\! 4\eta_t z_t^2 I.$
        
        \STATE Receive ${g_t=\nabla f_t(x_t)}$ and construct ${\nabla_t = g_t + 32 \eta_t \inner{x_t}{g_t - m_t}(g_t - m_t)}$.
        \STATE Update $x_{t+1}' = \argmin_{x\in\cX}\{\inner{x}{\nabla_t} + D_{\psi_t}(x, x_t')\}$.
    \ENDFOR
\end{algorithmic}
\end{algorithm}

The ONS algorithm~\cite{hazan2007logarithmic} is an iterative algorithm that adaptively updates a second-order (Hessian-based) model of the loss, allowing more efficient gradient-based updates and improved regret bounds. \oons\ also maintains two sequences $\{x_t\}_{t=1}^T$ and $\{x'_t\}_{t=1}^T$ like OMD and OFTRL, which is achieved by introducing the optimistic prediction $m_t$. Chen et al.~\citet{chen2021impossible} also considered combining their Multi-scale Multiplicative-weight with Correction (MsMwC) algorithm with this variant of the ONS algorithm. However, the step-size $\eta$ is fixed in their algorithm and the MsMwC algorithm is applied to learn the optimal $\eta_\star$. Different from it, in \oons, we consider adaptive step-sizes $\eta_t$. 

\begin{theorem}\label{thm:thm1}
Suppose that $\|g_t-m_t\|_2\leq z_t$, $z_t$ is non-decreasing in $t$,   $64\eta_t D z_T\leq 1$ for all $t\in[T]$, and $\eta_t$ is non-increasing in $t$. Then, \oons\ guarantees that
\begin{equation}\label{eq:thm1}
\resizebox{\textwidth}{!}{$
\regret_T(u) \leq O\left(\frac{r\ln(T\eta_1 z_T/z_1)}{\eta_T}+z_1^2\|u\|_2^2 + D(z_T - z_1) + \sum_{t=1}^T \eta_t \langle u, g_t - m_t \rangle^2 - z_1^2 \sum_{t=2}^T \|x_t - x_{t-1}\|_2^2 \right)
$}
\end{equation}
where $r$ is the rank of $\sum_{t=1}^T(g_t-m_t)(g_t-m_t)^\top$.
\end{theorem}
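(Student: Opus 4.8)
The plan is to reduce the regret to a linearized quantity and then run an optimistic mirror-descent analysis tailored to the adaptive quadratic regularizer $\psi_t$. First, by convexity of $f_t$ I would write $\regret_T(u)\le\sum_{t=1}^T\langle x_t-u,g_t\rangle$. The decisive step is to pass to the second-order surrogate $c_t(x)=\langle x,g_t\rangle+16\eta_t\langle x,g_t-m_t\rangle^2$, which is convex with $\nabla c_t(x_t)=\nabla_t$; its first-order lower bound gives $\langle x_t-u,g_t\rangle\le\langle x_t-u,\nabla_t\rangle-16\eta_t\langle x_t,g_t-m_t\rangle^2+16\eta_t\langle u,g_t-m_t\rangle^2$. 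This is precisely the device that will convert the iterate-dependent quadratics into the comparator-dependent term $\sum_t\eta_t\langle u,g_t-m_t\rangle^2$ appearing in the bound, which is what makes the analysis viable on unbounded domains.

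Next I would bound $\sum_t\langle x_t-u,\nabla_t\rangle$ by the optimistic-OMD decomposition for the two proximal steps. Adding the three-point inequalities for the $\nabla_t$-step and the $m_t$-step yields $\langle x_t-u,\nabla_t\rangle\le D_{\psi_t}(u,x_t')-D_{\psi_t}(u,x_{t+1}')+\langle x_t-x_{t+1}',\nabla_t-m_t\rangle-D_{\psi_t}(x_{t+1}',x_t)-D_{\psi_t}(x_t,x_t')$. A crucial structural identity is $\nabla_t-m_t=(1+32\eta_t\langle x_t,g_t-m_t\rangle)(g_t-m_t)$, so $\nabla_t-m_t$ is a scalar multiple of $g_t-m_t$; the hypothesis $64\eta_t Dz_T\le1$ (with $\|x_t\|_2\le D$) forces this scalar into $[\tfrac12,\tfrac32]$. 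Hence the inner product combines with $-D_{\psi_t}(x_{t+1}',x_t)$ into a stability term bounded by $\tfrac98\|g_t-m_t\|_{A_t^{-1}}^2$, and since $A_t$ contains the data matrix $\sum_{s<t}\eta_s(\nabla_s-m_s)(\nabla_s-m_s)^\top$ as well as the hint $4\eta_t z_t^2 I$, an elliptical-potential (log-determinant) bound applies; using that $\eta_t$ is non-increasing to factor out $\eta_T$ and that $z_t$ is non-decreasing produces the $\tfrac{r\ln(T\eta_1 z_T/z_1)}{\eta_T}$ term.

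It then remains to telescope the Bregman differences. Writing $\sum_t[D_{\psi_t}(u,x_t')-D_{\psi_t}(u,x_{t+1}')]=D_{\psi_1}(u,0)-D_{\psi_T}(u,x_{T+1}')+\sum_{t\ge2}\tfrac12\|u-x_t'\|_{A_t-A_{t-1}}^2$, the initial term contributes $O(z_1^2\|u\|_2^2)$, the increments of the hint part $4\eta_t z_t^2 I$ contribute $O(D(z_T-z_1))$ via $\|u-x_t'\|_2\le D$, and the increments of the data part contribute $\langle u-x_t',g_{t-1}-m_{t-1}\rangle^2$-type quantities (using the scalar identity again). Splitting $\langle u-x_t',\cdot\rangle^2\le 2\langle u,\cdot\rangle^2+2\langle x_t',\cdot\rangle^2$, the comparator piece yields the desired $\sum_t\eta_t\langle u,g_t-m_t\rangle^2$, while the iterate piece $\langle x_t',\cdot\rangle^2$ is matched against the correction $-16\eta_t\langle x_t,g_t-m_t\rangle^2$ after relating $x_t'$ to $x_t$. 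Finally, retaining the constant component $A_t\succeq 4z_1^2 I$ in $-D_{\psi_t}(x_t,x_t')$ (and in the companion movement term) and applying $\|x_t-x_{t-1}\|_2^2\le 2\|x_t-x_t'\|_2^2+2\|x_t'-x_{t-1}\|_2^2$ leaves the negative RVU term $-z_1^2\sum_{t\ge2}\|x_t-x_{t-1}\|_2^2$.

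The main obstacle is exactly this last round of bookkeeping. The second-order correction must precisely absorb the iterate-dependent quadratics generated by telescoping the data-dependent regularizer, while the negative movement terms must simultaneously soak up the cross-residuals $\langle x_{t+1}'-x_t,g_t-m_t\rangle^2$ created by that cancellation and still leave a clean $-z_1^2\|x_t-x_{t-1}\|^2$. Arranging all the numerical constants (the $16$, $32$, and $64$) so that this coupling closes, with the adaptive step-size $\eta_t$ sitting inside the matrices $A_t$ throughout, is the delicate part; the remaining ingredients are the standard optimistic-OMD and log-determinant estimates.
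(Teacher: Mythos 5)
Your proposal follows essentially the same route as the paper's proof: linearize by convexity, pass to the surrogate $c_t(x)=\langle x,g_t\rangle+16\eta_t\langle x,g_t-m_t\rangle^2$ so that the correction cancels the iterate-dependent quadratics, apply the optimistic three-point decomposition, control the stability term by an elliptical-potential (log-determinant) argument yielding $r\ln(T\eta_1 z_T/z_1)/\eta_T$, and telescope the Bregman differences so that the hint increments give $D(z_T-z_1)$, the data increments give $\sum_t\eta_t\langle u,g_t-m_t\rangle^2$, and the retained $4z_1^2 I$ floor gives the negative movement term. The only cosmetic difference is that the paper routes the cross-residual $\eta_t\langle x_t-x'_{t+1},\nabla_t-m_t\rangle^2$ back into the log-determinant term (via $2\eta_t D z_t\le 1/32$) rather than into the negative movement term, but this does not change the argument.
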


Next, we verify that \oons\ also works for the case with known parameters $D, G$, and we can also obtain a similar regret bound as \citet{sachs2023accelerated} and \citet{chen2024optimistic}.
The regret bound of \oons\ for the SEA model with known parameters $D$ and $G$ is presented below. We specify the adaptive step-size for all $t\in[T]$ as 
\begin{align}
\eta_t=\min\bigg\{\frac{1}{64Dz_T}, \frac{1}{D\sqrt{\sum_{s=1}^{t-1}\|g_{s}-m_s\|_2^2}}\bigg\}.
\label{eqn:eta_t_ONS}
\end{align}
Since $G$ is known, we have $\|g_t-m_t\|_2\leq z_t=2G, \forall t\in[T]$ and $\eta_t$ is defined in terms of $z_T$ here.

\begin{theorem}\label{thm:knownDG}
Under Assumptions~\ref{ass:G},~\ref{ass:D},~\ref{ass:L}, and \ref{ass:convexF}, \oons\ with step-size $\eta_t$ given in \eqref{eqn:eta_t_ONS}, $m_t=\nabla f_{t-1}(x_{t-1})$ and $z_t=2G$ for all $t\in[T]$ ensures $\mathbb{E}[\regret_{T}(u)] =\tilde{O}(\sqrt{\sigma_{1:T}^2}+\sqrt{\Sigma_{1:T}^2})$.
\end{theorem}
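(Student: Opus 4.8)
The plan is to invoke the pathwise regret bound of Theorem~\ref{thm:thm1} and then convert its right-hand side into the variance-type quantities $\sigma_{1:T}^2$ and $\Sigma_{1:T}^2$ by taking expectations and exploiting the negative stability term. First I would verify that the hypotheses of Theorem~\ref{thm:thm1} hold under the present choices: with $z_t = 2G$ we have $\|g_t - m_t\|_2 \le \|g_t\|_2 + \|m_t\|_2 \le 2G$ by Assumption~\ref{ass:G}, so the range-hint condition holds; $z_t$ is trivially non-decreasing; the cap $\eta_t \le 1/(64Dz_T)$ built into \eqref{eqn:eta_t_ONS} gives $64\eta_t D z_T \le 1$; and since $\sum_{s<t}\|g_s-m_s\|_2^2$ is non-decreasing, $\eta_t$ is non-increasing. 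Because $z_1 = z_T = 2G$, the term $D(z_T - z_1)$ vanishes and $\ln(T\eta_1 z_T/z_1) = \ln(T\eta_1)$, so Theorem~\ref{thm:thm1} collapses to
\[
\regret_T(u) \le O\!\left(\tfrac{r\ln(T\eta_1)}{\eta_T} + G^2\|u\|_2^2 + \sum_{t=1}^T \eta_t\langle u, g_t - m_t\rangle^2 - G^2\sum_{t=2}^T\|x_t - x_{t-1}\|_2^2\right).
\]

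Next I would reduce the first and third positive terms to a single $\sqrt{\sum_t\|g_t - m_t\|_2^2}$. Writing $\tfrac{1}{\eta_T} = \max\{128DG,\, D\sqrt{\sum_{s<T}\|g_s-m_s\|_2^2}\} \le 128DG + D\sqrt{\sum_{s<T}\|g_s-m_s\|_2^2}$ bounds the first term by $\tilde O(\sqrt{\sum_t\|g_t-m_t\|_2^2} + 1)$ (absorbing $r \le d$, $D$, $G$ and the $\ln T$ factor). For the fourth term, Cauchy--Schwarz gives $\langle u, g_t-m_t\rangle^2 \le \|u\|_2^2\|g_t-m_t\|_2^2$, after which $\eta_t \le 1/(D\sqrt{\sum_{s<t}\|g_s-m_s\|_2^2})$ together with the standard self-confident tuning lemma $\sum_t a_t/\sqrt{\sum_{s<t}a_s} = O(\sqrt{\sum_t a_t})$ (the $t=1$ term handled separately via $\eta_1=1/(128DG)$, using $\|g_t-m_t\|_2^2 \le 4G^2$ to control the $\sum_{s<t}$ versus $\sum_{s\le t}$ gap) yields $\sum_t\eta_t\langle u,g_t-m_t\rangle^2 = \tilde O(\|u\|_2^2 D^{-1}\sqrt{\sum_t\|g_t-m_t\|_2^2})$. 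Collecting these, there is a deterministic constant $C = \tilde O(1)$ (hiding $d,D,G,\|u\|_2$ and polylogs) and $c_1>0$ from the negative term such that, pathwise,
\[
\regret_T(u) \le C\sqrt{\textstyle\sum_{t=1}^T\|g_t-m_t\|_2^2} - c_1 G^2\sum_{t=2}^T\|x_t-x_{t-1}\|_2^2 + C'.
\]

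The crux is to convert $\sum_t\|g_t-m_t\|_2^2$ into the SEA quantities. Writing $g_t - m_t = \nabla f_t(x_t) - \nabla f_{t-1}(x_{t-1})$ and inserting $\pm\nabla F_t(x_t)$, $\pm\nabla F_{t-1}(x_t)$, $\pm\nabla F_{t-1}(x_{t-1})$, I would split it into the stochastic noise $\nabla f_t(x_t) - \nabla F_t(x_t)$, the adversarial increment $\nabla F_t(x_t) - \nabla F_{t-1}(x_t)$, the smoothness term $\nabla F_{t-1}(x_t) - \nabla F_{t-1}(x_{t-1})$, and the lagged noise $\nabla F_{t-1}(x_{t-1}) - \nabla f_{t-1}(x_{t-1})$, then apply $\|a+b+c+d\|_2^2 \le 4(\|a\|_2^2+\cdots)$. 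Since $x_t$ is $\mathcal{F}_{t-1}$-measurable, $\E[\|\nabla f_t(x_t) - \nabla F_t(x_t)\|_2^2] \le \sigma_t^2$ and likewise for the lagged term; the adversarial increment sums to $\Sigma_{1:T}^2$; and $L$-smoothness (Assumption~\ref{ass:L}) bounds the third piece by $L\|x_t-x_{t-1}\|_2$. Hence $\E[\sum_t\|g_t-m_t\|_2^2] \le 8\sigma_{1:T}^2 + 4\Sigma_{1:T}^2 + 4L^2 V$ with $V := \E[\sum_t\|x_t-x_{t-1}\|_2^2]$. Taking expectations in the pathwise bound, applying Jensen ($\E\sqrt{X} \le \sqrt{\E X}$) and subadditivity of the square root gives $\E[\regret_T(u)] \le C(2\sqrt{2}\sqrt{\sigma_{1:T}^2} + 2\sqrt{\Sigma_{1:T}^2} + 2L\sqrt{V}) - c_1 G^2 V + C'$.

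The main obstacle — and the step that genuinely uses the RVU structure highlighted in Section~\ref{sec:setup} — is that the smoothness contribution $2CL\sqrt{V}$ sits under a square root while the stability term $-c_1 G^2 V$ is linear in $V$, so they cannot be cancelled directly. I would resolve this with Young's inequality, $2CL\sqrt{V} \le c_1 G^2 V + (CL)^2/(c_1 G^2)$, which removes the dependence on $V$ and leaves only a constant. This gives $\E[\regret_T(u)] \le 2\sqrt{2}\,C\sqrt{\sigma_{1:T}^2} + 2C\sqrt{\Sigma_{1:T}^2} + \tilde O(1) = \tilde O(\sqrt{\sigma_{1:T}^2} + \sqrt{\Sigma_{1:T}^2})$, as claimed. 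The delicate point to track is that the hidden constant $c_1$ in the negative term of Theorem~\ref{thm:thm1} must be strictly positive and known up to constants for this cancellation to go through; matching it against the $L^2$ generated by smoothness is precisely what makes the negative stability term indispensable.
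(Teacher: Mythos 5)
Your proposal is correct and follows essentially the same route as the paper's proof: it invokes Theorem~\ref{thm:thm1} with the step-size \eqref{eqn:eta_t_ONS}, controls $\sum_t\eta_t\langle u,g_t-m_t\rangle^2$ via Cauchy--Schwarz and the self-confident tuning lemma (Lemma~\ref{lem:sumetat}), decomposes $\sum_t\|g_t-m_t\|_2^2$ into stochastic, adversarial, and smoothness pieces (the paper cites Lemma~\ref{lem:ex2sigma} for this step, which you re-derive with a slightly different choice of intermediate points), and cancels the resulting $L\sqrt{V}$ contribution against the negative stability term via Young's inequality. The only cosmetic difference is that the paper performs this AM--GM cancellation pathwise before taking expectations, whereas you take expectations and apply Jensen first; both orderings are valid and yield the same bound.
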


\begin{remark}
 Theorem~\ref{thm:knownDG} achieves the same (up to logarithmic terms)  dependence on 
$\sigma_{1:T}^2$ and $\Sigma_{1:T}^2$ as in \citet{sachs2023accelerated} and \citet{chen2024optimistic}. The primary reason to use \oons\ as the base algorithm instead of OMD~\cite{chen2024optimistic} is that the final regret bound for OMD typically depends on $D \sqrt{\sum_{t=1}^T \| g_t - m_t \|_2^2}$. In scenarios when $D$ is unknown or potentially infinite, like in Section~\ref{sec:fullyfree}, this might lead to $O(T)$ regret bounds. By contrast, \oons\ leverages adaptive second-order information, which helps remove (or substantially reduce) explicit dependence on $D$. In \eqref{eq:thm1}, the only term relevant to $D$ is $D(z_T - z_1)$, which solely depends on the starting and ending points, $z_1$ and $z_T$. 
\end{remark}

\section{Parameter-free Algorithms for the SEA Model}
In this section, we develop parameter-free algorithms for the SEA model, building on \oons\ which we use as the base algorithm. Moreover, we allow the decision set $\cX$ to be potentially unbounded throughout this section, i.e. $D=\infty$ in Assumption~\ref{ass:D}. In Section~\ref{sec:unknownD}, we present a \emph{comparator-adaptive} algorithm for the SEA model for unknown $D$ but known $G$, and then, we develop the \emph{comparator- and Lipschitz-adaptive} algorithm where both $D$ and $G$ are unknown in Section~\ref{sec:fullyfree}.

\subsection{Comparator-adaptive algorithm}\label{sec:unknownD}
We  now propose the \underline{C}omparator-\underline{A}daptive \underline{O}ptimistic \underline{O}nline \underline{N}ewton \underline{S}tep (\cawe) algorithm for the unknown $D$ (potentially infinite) but known $G$ case by using a meta-base algorithm framework. Recent works in \citet{chen2021impossible} and \citet{jacobsen2023unconstrained}  addressed the challenges associated with unbounded domains by developing  a base-learner framework. Building on this philosophy, we propose \cawe (Algorithm~\ref{alg:unknownD}) where we adopt the MsMwC--Master algorithm~\cite{chen2021impossible} as the meta algorithm.

\begin{algorithm}[!t]
\caption{Comparator-adaptive algorithm for the SEA model (\cawe)}
\label{alg:unknownD}
\textbf{Input:} Lipschitz constant $G$.
\begin{algorithmic}[1]
    \FOR{$t=1,\dotsc,T$}
    \STATE Create $N=\lceil\log T\rceil$ base-learners. Each base-learner $j\in[N]$ runs \oons\ with stepsize $\eta_t^j$.
    \STATE Each base-learner $j$ provides $x_t^j$.
    \STATE Run Algorithm~\ref{alg:meta} to obtain $w_t\in\Delta_N$.
    \STATE The final decision is $x_t=\sum_{j=1}^N w_{t,j} x_t^j$.
    \ENDFOR
\end{algorithmic}
\end{algorithm}

\begin{algorithm}[!t]
\caption{Meta Algorithm}
\label{alg:meta}
\textbf{Input:} Additional expert set $\cS$ defined in \eqref{eq:expertset}.\\
\textbf{Initialization:} $p'_1\in\Delta_{\cS}$ such that $p'_{1,k} \propto \beta_k^2$ for all $ k\in\cS$.
\begin{algorithmic}[1]
    \FOR{$t=1,\dotsc,T$}
    \STATE Construct $h_t\in\real^N$ with $h_t^j=\langle \nabla f_{t-1}(x_{t-1}^j), x_t^j \rangle$.
    \STATE Each expert $k\in\cS$ runs MsMwC with step-size $\beta_{t,j}^k$ and plays $w_t^k\in\Delta_N$.
    \STATE Receive $w_t^k$ for each $k\in\cS$ and compute $H_{t}^k=\inner{w_t^k}{h_t}$.
    \STATE Compute $p_t=\argmin_{p\in\Delta_{\cS}}\inner{p}{H_t}+D_{\phi}(p,p'_t)$.
    \STATE Play $w_t=\sum_{k\in\cS}p_{t,k}w_t^k\in\Delta_N$.
    \STATE Receive $\ell_t\in\real^N$. Define $L_{t}^k=\inner{w_t^k}{\ell_t}$ and $b_t^k=32\beta_k(L_{t}^k-H_{t}^k)$.
    \STATE Compute $p'_{t+1}=\argmin_{p\in\Delta_{\cS}}\inner{p}{L_t+b_t}+D_{\phi}(p,p'_t)$.
    \ENDFOR
\end{algorithmic}
\end{algorithm}

The algorithm uses $N$ base-learners. For \emph{any} base-learner $i \in [N]$, the regret can be decomposed as
\begin{align*}
&\regret_T(u)=\sum_{t=1}^T f_t(x_t) - \sum_{t=1}^T f_t(u)= \underbrace{\sum_{t=1}^T f_t(x_t) - \sum_{t=1}^T f_t(x_t^i)}_{\text{Meta Regret}} + \underbrace{\sum_{t=1}^T f_t(x_t^i) - \sum_{t=1}^T f_t(u)}_{\text{Base Regret}}~.
\end{align*}
Let $\cA_i$ denote the base algorithm for the $i$-th base-learner. We denote  $\regret_T^{\cA_i}(u)=\sum_{t=1}^T f_t(x_t^i) - \sum_{t=1}^T f_t(u)$ as the base regret by taking $\cA_i$ as the base algorithm.  Moreover, the final decision $x_{t}$ is a weighted-average of all the base-learners' decisions: $
x_{t} = \sum_{j=1}^N w_{t,j} x_{t}^j$ with $w_t\in\Delta_N$.
As such,
\begin{equation}\label{eq:splitreg}
\regret_T(u)\leq\regret_T^{\cA_i}(u)+\sum_{t=1}^T\langle \ell_t, w_t-w_\star^i\rangle,
\end{equation}
where $\ell_t\in\mathbb{R}^N$ with $\ell_t^j=\langle \nabla f_t(x_t^j),x_t^j\rangle$ and $w_{\star}^i$ is a  vector in $\Delta_N$ whose $j$-th component is $(w_{\star}^i)_j=1$ if $j=i$ and $0$ otherwise.  Refer to Appendix~\ref{app:splitreg} for the proof of~\eqref{eq:splitreg}.

We first consider the base algorithm. Specifically, for each base-learner $ j \in [N] $, we impose a constraint that it operates within   $\cX_j = \{ x : \|x\|_2 \leq D_j \;\text{and}\; x \in \cX \} $
, where $D_j = 2^j$. Then, we define $g_t^j=\nabla f_{t}(x_{t}^j)$ and $m_t^j=\nabla f_{t-1}(x_{t-1}^j)$. Each base-learner $j\in[N]$ runs \oons\ with step-size 
\begin{align}
\label{eq:PF-OONSwE eta stepsize}
\eta_t^j=\min\bigg\{\frac{1}{64D_jz_T},\frac{1}{D_j\sqrt{\sum_{s=1}^{t-1}\|g_{s}^j-m_s^j\|_2^2}}\bigg\},
\end{align}
which depends on $D_j$ instead of $D$ in \oons. Hence, each base-learner $j$ can update $x_t^j$ via \oons\ with step-size $\eta_t^j$. Since the final decision is  $x_t=\sum_{j=1}^N w_{t,j} x_t^j$, we need to adopt a meta-algorithm to learn the weight parameter $w_t\in \Delta_N$.

As mentioned above, we introduce a constraint that each base-learner $j\in[N]$ operates within a $D_j$-bounded domain. We can consider this as a ``multi-scale'' base-learner problem~\cite{bubeck2017online,cutkosky2018black,chen2021impossible} where each base-learner $j$ has a different loss range such that $|\ell_{t}^j|\leq GD_j$ since $\ell_{t}^j= \langle\nabla f_t(x_t^j),x_t^j\rangle$. We choose the Multi-scale Multiplicative-weight with Correction (MsMwC)--Master algorithm (Algorithm 2 in \citet{chen2021impossible}) as the meta-algorithm to learn $w_t$, which is implemented based on the MsMwC~\cite{chen2021impossible}. Details of MsMwC are presented in Appendix~\ref{app:msmwc}.
Specifically, we define a new expert set 
\begin{align}\label{eq:expertset}
\cS=\{k\in \mathbb{Z}:\exists j\in[N], GD_j \!\leq\! 2^{k-2}\!\leq\! GD_j\sqrt{T}\}.
\end{align}

For all $k\in\cS$, the step-size of the MsMwC--Master algorithm is set to $\beta_k=\frac{1}{32\cdot 2^k}$. Each expert $k\in\cS$ runs the MsMwC algorithm with $w'_1$ being uniform over $\cZ(k)$, where $\cZ_k=\{j\in[N]:GD_j\leq 2^{k-2}\}$. Moreover, each base MsMwC algorithm only works in the subset $\cZ(k)$, i.e., $w_t\in \Delta_N$ with $w_{t,j}=0$ for all $j\notin \cZ(k)$. We can view \cawe\ (Algorithm~\ref{alg:unknownD}) as a three-layer structure, where the meta-algorithm (Algorithm~\ref{alg:meta}) itself consists of two layers, which we refer to as \emph{meta top} and \emph{meta middle}. Also, the base layer is \oons\ algorithm. For clarity, we summarize the notations of the three-layer hierarchy for \cawe~in Table~\ref{tab:cawe}.

{\small 
\begin{table}[t]
\centering
\caption{Three-layer hierarchy of \cawe}
\vspace{1em}
\renewcommand{\arraystretch}{1.4} 
\begin{tabularx}{\textwidth}{c|c|c|c|c|X}
\hline

\hline
\textbf{Layer} & \textbf{Algorithm} & \textbf{Loss} & \textbf{Optimism} & \textbf{Decision} & $\quad\;\;$ \textbf{Output} \\
\hline
\textbf{Top Meta} 
& MsMwC 
&  \((L_t^k)_{k \in \cS}\) 
& \((H_t^k)_{k \in \cS}\) 
& \(p_t \in \Delta_{\cS}\) 
& \(w_t\!=\!\sum_{k} p_{t,k} w_t^k\) \\
\hline
\textbf{Middle Meta} 
& MsMwC 
& \(\ell_t \in \mathbb{R}^N\) 
& \(h_t \in \mathbb{R}^N\)
& \((w_t^k)_{k\in\cS} \in \Delta_N\)
& \(\qquad\quad  w_t^k\) \\
\hline
\textbf{Base} 
& OONS 
& \(\nabla f_t(x_t^j)\) 
& \(\nabla f_{t-1}(x_{t-1}^j)\) 
& \((x_t^j)_{j\in N} \in \mathcal{X}_j\) 
& \(\qquad\quad  x_t^j\) \\
\hline

\hline
\end{tabularx}
\label{tab:cawe}
\end{table}}

In the following, we provide the expected regret guarantee for \cawe.
\begin{theorem}\label{thm:unknownD}
Let $D$ be unknown (potentially infinite). Under Assumptions \ref{ass:G}, \ref{ass:L} and \ref{ass:convexF}, \cawe\ provides the following regret
\begin{equation}\label{eq:thmunknownD}
\mathbb{E}[\regret_T(u)]=\tilde{O}\Big(\|u\|_2^2+\|u\|_2(\sqrt{\sigma_{1:T}^2}+\sqrt{\Sigma_{1:T}^2})\Big).
\end{equation}
\end{theorem}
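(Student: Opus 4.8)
The plan is to bound the two pieces of the decomposition in \eqref{eq:splitreg} separately and then recombine them, crucially exploiting the negative stability (RVU) term that \oons\ produces in Theorem~\ref{thm:thm1}. First I would fix the comparator $u$ and select the base-learner $i\in[N]$ whose scale just captures it, i.e.\ the smallest $i$ with $D_i=2^i\geq\max\{\|u\|_2,1\}$, so that $u\in\cX_i$ and $\|u\|_2\leq D_i\leq 2\max\{\|u\|_2,1\}$; one checks that the matching scale $k$ with $2^{k-2}\in[GD_i,GD_i\sqrt{T}]$ indeed lies in $\cS$ (so expert $i$ is genuinely available to the master). With this choice, \eqref{eq:splitreg} reduces the problem to bounding (i) the base regret $\regret_T^{\cA_i}(u)$ of \oons\ run on $\cX_i$ and (ii) the meta regret $\sum_{t=1}^T\langle\ell_t,w_t-w_\star^i\rangle$ of the MsMwC master against the $i$-th vertex.

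For the base regret I would invoke Theorem~\ref{thm:thm1} with $D=D_i$ and the constant range hint $z_t=2G$, after verifying its hypotheses (gradient bound $\|g_t^i-m_t^i\|_2\leq 2G$ from Assumption~\ref{ass:G}, the cap $64\eta_t^i D_iz_T\leq1$ enforced by the first branch of \eqref{eq:PF-OONSwE eta stepsize}, and monotonicity of $\eta_t^i$). Since $z_t$ is constant, the term $D(z_T-z_1)$ \emph{vanishes}, which is precisely what eliminates any explicit dependence on the (possibly infinite) diameter. Writing $V_T^i:=\sum_{t=1}^T\|g_t^i-m_t^i\|_2^2$, the leading term becomes $\tilde O(D_i\sqrt{V_T^i}+GD_i)$, and the comparator term is tamed by Cauchy--Schwarz together with the adaptive step size: $\sum_t\eta_t^i\langle u,g_t^i-m_t^i\rangle^2\leq\frac{\|u\|_2^2}{D_i}\sum_t\frac{\|g_t^i-m_t^i\|_2^2}{\sqrt{\sum_{s<t}\|g_s^i-m_s^i\|_2^2}}\leq 2\|u\|_2\sqrt{V_T^i}$, using $D_i\geq\|u\|_2$ (the $t=1$ term contributes a benign $O(G\|u\|_2)$ via the first branch of the step size). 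This yields $\regret_T^{\cA_i}(u)\leq\tilde O\big(\|u\|_2\sqrt{V_T^i}+G^2\|u\|_2^2\big)-4G^2\sum_{t=2}^T\|x_t^i-x_{t-1}^i\|_2^2$, where I deliberately \emph{retain} the negative stability term.

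For the meta regret I would invoke the multi-scale guarantee of the MsMwC master (Appendix~\ref{app:msmwc}). Because the base-learners carry heterogeneous loss ranges $|\ell_t^j|\leq GD_j$, the scale grid $\cS$ lets the master behave as if it could tune its learning rate in hindsight, converting the single-scale bound $\tilde O(\beta^{-1}+\beta\sum_t(\ell_t^i-h_t^i)^2)$ into $\tilde O\big(GD_i+\sqrt{\sum_t(\ell_t^i-h_t^i)^2}\big)$ (this is the ``impossible-tuning'' property; the grid range covers the optimal scale since $\sum_t(\ell_t^i-h_t^i)^2\leq(GD_i)^2\cdot 4T$). Using $\ell_t^i-h_t^i=\langle g_t^i-m_t^i,x_t^i\rangle$ and $\|x_t^i\|_2\leq D_i$ gives $\sum_t(\ell_t^i-h_t^i)^2\leq D_i^2V_T^i$, so the meta regret is also $\tilde O\big(D_i\sqrt{V_T^i}+GD_i\big)=\tilde O\big(\|u\|_2\sqrt{V_T^i}+G\|u\|_2\big)$, exposing exactly the same $\sqrt{V_T^i}$ structure as the base.

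Adding the two bounds gives a total regret of $\tilde O\big(\|u\|_2\sqrt{V_T^i}+G^2\|u\|_2^2\big)-4G^2\sum_{t\geq2}\|x_t^i-x_{t-1}^i\|_2^2$, and the final step---the main obstacle---is to take expectations and convert $V_T^i$ into the SEA quantities without reintroducing a diameter- or stability-dependent penalty. I would split $g_t^i-m_t^i=(\nabla f_t(x_t^i)-\nabla F_t(x_t^i))+(\nabla F_t(x_t^i)-\nabla F_{t-1}(x_t^i))+(\nabla F_{t-1}(x_t^i)-\nabla F_{t-1}(x_{t-1}^i))+(\nabla F_{t-1}(x_{t-1}^i)-\nabla f_{t-1}(x_{t-1}^i))$, which (as in the proof of Theorem~\ref{thm:knownDG}, and using Assumption~\ref{ass:L} for the middle term) gives $\mathbb{E}[V_T^i]\leq O(\sigma_{1:T}^2+\Sigma_{1:T}^2+L^2S)$ with $S:=\mathbb{E}[\sum_{t\geq2}\|x_t^i-x_{t-1}^i\|_2^2]$. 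By Jensen's inequality $\mathbb{E}[\sqrt{V_T^i}]\leq\sqrt{\mathbb{E}[V_T^i]}$ and subadditivity of the square root, the expected regret is at most $\tilde O\big(\|u\|_2(\sqrt{\sigma_{1:T}^2}+\sqrt{\Sigma_{1:T}^2})+\|u\|_2 L\sqrt S+G^2\|u\|_2^2\big)-4G^2S$. Finally AM-GM, $C\|u\|_2L\sqrt S\leq\frac{C^2L^2\|u\|_2^2}{16G^2}+4G^2S$, lets the $4G^2S$ cancel the retained negative stability term, leaving $\tilde O\big(\|u\|_2^2+\|u\|_2(\sqrt{\sigma_{1:T}^2}+\sqrt{\Sigma_{1:T}^2})\big)$. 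The delicate point is coordinating these cancellations: the smoothness-induced variation $L^2S$ enters through \emph{both} the base and the meta $\sqrt{V_T^i}$ terms, so the single negative term from \oons\ must absorb both contributions, and the multi-scale coverage of $\cS$ must be arranged so that no residual $D$- or $L^2\sum\|\Delta x\|^2$-dependence survives.
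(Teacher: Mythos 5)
Your proposal follows essentially the same route as the paper's proof: the same base/meta decomposition via \eqref{eq:splitreg}, the same choice of the smallest base-learner index $i$ with $D_i\geq\|u\|_2$, Theorem~\ref{thm:thm1} with constant hint $z_t=2G$ for the base regret, the multi-scale MsMwC master guarantee (Lemma~\ref{lem:msmwc_master}) with $\sum_t(\ell_t^i-h_t^i)^2\leq D_i^2\sum_t\|g_t^i-m_t^i\|_2^2$ for the meta regret, and the conversion of $\sqrt{\sum_t\|g_t^i-m_t^i\|_2^2}$ into $\sigma_{1:T}^2,\Sigma_{1:T}^2$ by the gradient-variation decomposition with the smoothness-induced $L\|u\|_2\sqrt{S}$ term absorbed into the retained negative stability term via AM--GM, exactly as in the paper's chain ending at \eqref{eq:regretwithDi}.

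There is one gap: your selection of ``the smallest $i\in[N]$ with $D_i=2^i\geq\max\{\|u\|_2,1\}$'' presupposes that such an index exists, but since the domain is unbounded the comparator may satisfy $\|u\|_2>D_{\max}=2^N$, in which case no base-learner contains $u$ and the whole decomposition \eqref{eq:splitreg} is unavailable. The paper treats this as a separate case: because $N=\lceil\log T\rceil$ forces $T\leq 2^N<\|u\|_2$, the trivial linearized bound $\sum_{t=1}^T\langle\nabla f_t(x_t),x_t-u\rangle\leq 2GT\|u\|_2\leq 2G\|u\|_2^2$ already fits inside the claimed $\tilde{O}(\|u\|_2^2)$ term. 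This is a short but necessary addition; without it the theorem is only proved for comparators of norm at most $2^N$.
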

This regret guarantee is referred to as ``comparator-adaptive'' because it depends directly on the norm of the comparator, $\|u\|_2$, rather than explicitly relying on the diameter of the decision set, $D$. Notably, when considering the constrained decision set with a diameter $D$, our regret bound immediately recovers the result $\mathbb{E}[\regret_T(u)]=\tilde{O}(D^2+D(\sqrt{\sigma_{1:T}^2}+\sqrt{\Sigma_{1:T}^2}))$ established in~\citet{sachs2023accelerated,chen2024optimistic}.



One limitation of our regret bound~\eqref{eq:thmunknownD} is that when particularizing to adversarial OCO, it achieves only an $\tilde{O}(\|u\|_2^2 + \|u\|_2 \sqrt{T})$ worst-case regret bound, which falls short of the best-known $\tilde{O}(\|u\|_2 \sqrt{T})$ regret bound~\cite{orabona2016coin, jacobsen2022parameter, mhammedi2020lipschitz}. However, in the following two remarks, we will justify the $\|u\|_2$-dependence in the gradient-variation regret and emphasize the fundamental challenge of achieving adaptivity from the gradient-variation bound (for smooth functions) to the worst-case bound (for the non-smooth case) when the decision set of online learning is \emph{unconstrained}. 

\begin{remark}[Dependency on $\|u\|_2^2$] 
\label{remark:dependence-u2}
Recent studies have established the connection between gradient-variation online learning and accelerated offline optimization through advanced online-to-batch conversions~\cite{LectureNote:AOpt25,NeurIPS'25:Holder}. Specifically, let $d_0 = \|x_0 - x_* \|_2$ denote the distance of an initial point $x_0$ to the optimum $x_*$. For an $L$-smooth function, gradient-variation online algorithms using first-order information correspond to an accelerated convergence rate of $O(L d_0^2/T^2)$ via the \emph{stabilized} online-to-batch conversion~\cite{kavis2019unixgrad}. For a $G$-Lipschitz function, the problem-independent regret bounds translate to an $O(G d_0/\sqrt{T})$ rate through the standard conversion~\cite{cesa2006prediction}. In this context, we hypothesize that the $\|u\|_2^2$ term may be unavoidable in gradient-variation regret for unconstrained online learning, paralleling how the $d_0^2$ term also appears in the accelerated rate of unconstrained offline optimization. 
\end{remark}

\begin{remark}[Adaptivity between gradient-variation bound and worst-case bound]\label{remark:adaptive}
We argue that achieving \emph{adaptivity} between the gradient-variation bound and the problem-independent worst-case bound in \emph{unconstrained} online learning may be as challenging as achieving \emph{universality} in offline optimization over unconstrained domains, where the method must adapt to both smooth and Lipschitz functions. To the best of our knowledge, the best-known universal method for offline unconstrained optimization is by~\cite{kreisler2024accelerated}, which combines \textsc{UnixGrad}~\cite{kavis2019unixgrad} with the \textsc{DoG} step size~\cite{ivgi2023dog}. Nonetheless, this method is complex and still relies on a predefined range of parameters, highlighting both the difficulty of the problem and the fact that it remains only partially solved. Consequently, designing a \emph{single} unconstrained online learning algorithm that adaptively bridges the gradient-variation regret bound for smooth functions and the worst-case bound for Lipschitz functions is non-trivial, which could provide new insights into universal offline optimization methods. We leave this for future work.
\end{remark}

\begin{remark}[On dependence on the time horizon]\label{remark:anytime}
The use of $T$ in \cawe\ (via $N=\lceil \log T\rceil$ experts) is only for convenience and not fundamental. An \emph{anytime} variant is obtained by the standard doubling trick: restart the algorithm at epochs of lengths $1,2,4,\dots$, and in epoch $k$ set $N_k=\lceil \log 2^{k-1}\rceil$. This introduces at most an additional logarithmic factor already hidden in $\tilde{O}(\cdot)$~\citep[Section 4.3]{zhao2021bandit}. A restart-free alternative is a sleeping (awakening) expert grid of learning rates as in the multi–rate construction of \citet{mhammedi2019lipschitz}, which activates only those experts whose scale becomes relevant. 
\end{remark}

\subsection{Comparator- and Lipschitz-adaptive Algorithm}\label{sec:fullyfree}
The algorithm in the previous subsection requires \emph{prior} knowledge of the Lipschitz constant $G$. Due to practical limitations such knowledge may not be available in real applications. A comparator- and Lipschitz-adaptive algorithm would instead {\em adapt} to an unknown Lipschitz constant $G$. 

A simple approach to handling the unknown gradient norms, proposed by \citet{cutkosky2019artificial}, relies on a gradient-clipping reduction. The key idea is to design an algorithm $\cA$ that achieves appropriate regret when given prescient ``hints'' $h_t \geq \|g_t\|_2$ at the start of round $t$. Since such hints are impractical (as $g_t$ is not observed beforehand), we instead approximate them using a clipped gradient, inspired by~\cite{cutkosky2019artificial}. We start with an initial guess $B_0$ on the range of $\max_{t} \|g_t - m_t\|_2$, where $g_t=\nabla f_t(x_t)$. We define $B_t=\max_{0 \leq s \leq t} \|g_s - m_s\|_2$ as the predicted error range up to iteration $t$. The truncated gradient is then defined as $\tilde{g}_t = m_t + \frac{B_{t-1}}{B_t} (g_t - m_t)$. The truncated gradient satisfies $\|\tilde{g}_t - m_t\|_2 \leq B_{t-1}$, allowing the learner to assume that the range of predicted error in iteration $t$ is known at the start.

Next, we initialize the decision set diameter guess as $D_1=1$. For each iteration $t\in[T]$, we first play $x_t$ and receive $g_t=\nabla f_t(x_t)$. To update $D_t$, we consider the condition  $
D_t < \sqrt{\sum_{s=1}^t\frac{\|g_s \|_2}{\max\{1,\max_{k \leq s} \| g_k \|_2 \}}}$. If this condition holds, we update $D_{t+1}$ using the doubling trick.
This ensures that 
we need to update $D_t$ a maximum of $M=O(\log T)$ times. 
We divide the total $T$ iterations into disjoint subsets of $M$ iterations. If the ``doubling'' occurs at the $t$-th iteration, we update $t_a\leftarrow t$ and reset $x'_{t+1}=0$ and the matrix $A_{t+1}$ in \oons\ as follows
\begin{align}
\label{eq:updatedAt}
A_{t+1} &= 4z_{t_a+1}^2 I \!+\! \sum_{s=t_a+1}^{t} \!\eta_s(\nabla_s - m_s)(\nabla_s - m_s)^\top+4\eta_{t+1}z_{t+1}^2 I.
\end{align}
Then, we feed $\tilde{g}_t$ to \oons\ running in the $D_{t+1}$-bounded domain $\cX_{t+1}=\{x:\|x\|_2\leq D_{t+1}\wedge x \in \cX\}$ and obtain $x_{t+1}$. 
We summarize the ideas in Algorithm~\ref{alg:fullyfree} and term it as \underline{C}omparator and \underline{L}ipschitz-\underline{A}daptive \underline{O}ptimistic \underline{O}nline \underline{N}ewton \underline{S}tep (or \claoons) algorithm.

\begin{algorithm}[t]
\caption{Comparator and Lipschitz-Adaptive (or \claoons) for the SEA model}
\label{alg:fullyfree}

    \textbf{Input:} Initial scale $B_0$.

    \textbf{Initialize:} $D_1=1$.
    
    \begin{algorithmic}[1]
    \FOR{$t=1,\ldots, T$}
        \STATE Run \oons\ in $D_t$-bounded domain and obtain $x_t$. Play $x_t$ and receive $g_t=\nabla f_t(x_t)$.
        \STATE Construct $\tilde{g}_t=m_t+\frac{B_{t-1}}{B_t}(g_t-m_t)$, where $B_t=\max_{0\leq s\leq t}\|g_s-m_s\|_2$.
        \IF{$D_t < \sqrt{\sum_{s=1}^t\frac{\|g_s \|_2}{\max\{1,\max_{k \leq s} \| g_k \|_2\} }}$}
            \STATE Update $D_{t+1} = 2\sqrt{\sum_{s=1}^t\frac{\|g_s \|_2}{\max\{1,\max_{k \leq s} \| g_k \|_2 \}}}$ and reset $A_{t+1}$ as \eqref{eq:updatedAt} and $x'_{t+1}=0$.
        \ENDIF
        \STATE Feed $\tilde{g}_t$ to \oons\ running in the $D_{t+1}$-bounded domain and get $x_{t+1}$, where  $z_{t+1} = B_t$. 
    \ENDFOR
\end{algorithmic}
\end{algorithm}

\begin{theorem}\label{thm:fullyfree}
Let both $D$ (potentially infinite) and $G$ be unknown. Under Assumptions~\ref{ass:G} (but $G$ is unknown),~\ref{ass:L} and~\ref{ass:convexF}, the proposed \fpfons\ algorithm satisfies  
\begin{align*}
\mathbb{E}[\regret_{T}(u)]\leq\tilde{O}\Big(\|u\|_2^2(\sqrt{\sigma_{1:T}^2}+\sqrt{\Sigma_{1:T}^2})+G^2\|u\|_2^2+\|u\|_2^4+G\|u\|_2^3+G^2\sqrt{\sigma_{1:T} + \mathfrak{G}_{1:T}}\Big), 
\end{align*}
where $\sigma_{1:T}$ captures the stochastic gradient deviation (without the squared norm) and $\mathfrak{G}_{1:T}$ denotes the sum of maximum expected gradients.
\end{theorem}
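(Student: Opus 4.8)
The plan is to build \claoons's analysis on top of the two reductions already in place: the gradient-clipping reduction of \citet{cutkosky2019artificial} that handles the unknown Lipschitz constant $G$, and the doubling-based diameter estimation that handles the unknown (possibly infinite) $D$. First I would fix a comparator $u$ and, as in Theorem~\ref{thm:unknownD}, locate the epoch structure: the domain-diameter guesses $D_t$ are non-decreasing and, by the doubling rule in Line~5, change at most $M=O(\log T)$ times, partitioning $[T]$ into at most $M$ contiguous blocks on which \oons\ runs with a fixed bounded domain $\cX_t$ and a fresh reset of $x'$ and $A_t$ (via \eqref{eq:updatedAt}). On each block I would invoke Theorem~\ref{thm:thm1} with the truncated gradients $\tilde g_t$, using the key property $\|\tilde g_t-m_t\|_2\le B_{t-1}=z_t$ so that the range-hint hypothesis of that theorem is met and $z_t$ is non-decreasing by construction.

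The core of the argument is to control the three pieces that emerge: (i) the per-block \oons\ regret from \eqref{eq:thm1}, summed over the $O(\log T)$ blocks, which contributes the dominant variation-type terms; (ii) the clipping error $\sum_t \langle u-x_t,\,g_t-\tilde g_t\rangle$ incurred by running on $\tilde g_t$ instead of $g_t$; and (iii) the error from the final-diameter guess not matching $\|u\|_2$. For piece (ii), I would use $\|g_t-\tilde g_t\|_2=\big(1-\tfrac{B_{t-1}}{B_t}\big)\|g_t-m_t\|_2\le B_t-B_{t-1}$, so the cumulative clipping penalty telescopes into $O(B_T)=O(G)$ up to the $\|u\|_2$ and $\|x_t\|_2$ factors, mirroring the clipping bound in \citet{cutkosky2019artificial}. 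The stopping rule for the doubling guarantees $D_T=O\big(\sqrt{\sum_{s=1}^T \|g_s\|_2/\max\{1,\max_{k\le s}\|g_k\|_2\}}\big)$, and I would bound this empirical quantity in expectation by $\sqrt{\sigma_{1:T}+\mathfrak G_{1:T}}$ using $\E\|\nabla f_t(x_t)\|_2\le \sigma_t+\mathfrak G_t$ together with $\max_{k\le s}\|g_k\|_2\ge\|g_s\|_2$; this is precisely where the $G^2\sqrt{\sigma_{1:T}+\mathfrak G_{1:T}}$ term is produced.

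The translation from the linearized/truncated regret back to the true regret $\E[\regret_T(u)]$ then follows the now-standard SEA bookkeeping used for Theorems~\ref{thm:knownDG} and~\ref{thm:unknownD}: convexity gives $f_t(x_t)-f_t(u)\le\langle g_t,x_t-u\rangle$, and the RVU-style negative stability term $-z_1^2\sum_t\|x_t-x_{t-1}\|_2^2$ from \eqref{eq:thm1} is used to cancel the $\sum_t\eta_t\langle u,g_t-m_t\rangle^2$ contribution, converting the gradient-variation sum into $\sqrt{\sigma_{1:T}^2}+\sqrt{\Sigma_{1:T}^2}$ in expectation via $\E\|g_t-m_t\|_2^2\lesssim \sigma_t^2+\|\nabla F_t-\nabla F_{t-1}\|_2^2$. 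Substituting the diameter bound $D_t=O(\|u\|_2)$ on the relevant block (so that $u\in\cX_t$ whenever $\|u\|_2\le D_t$) produces the $\|u\|_2^2(\sqrt{\sigma_{1:T}^2}+\sqrt{\Sigma_{1:T}^2})$ and $\|u\|_2^4$ terms, while the clipping and Lipschitz-adaptation overhead contributes $G^2\|u\|_2^2$ and $G\|u\|_2^3$.

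The hard part will be piece (iii) combined with the resets: unlike the meta-base construction of \cawe, here the domain and the accumulated second-order matrix $A_t$ are \emph{reset} whenever the diameter doubles, so I must argue that discarding the past curvature information costs only an $O(\log T)$ multiplicative factor rather than destroying the variation dependence — this requires showing that on each block the relevant $\sum\eta_t\langle u,g_t-m_t\rangle^2$ and negative-stability terms still telescope cleanly despite starting from $x'=0$, and that the comparator $u$ lies in $\cX_t$ for the block where $D_t\approx\|u\|_2$. Handling the interaction between the doubling schedule for $D_t$ and the clipping schedule for $B_t$ simultaneously, and verifying that the expectation arguments convert the empirical quantities $\sum\|g_t-m_t\|_2^2$ and $\sum\|g_s\|_2/\max_k\|g_k\|_2$ into their variance-based counterparts $\sigma_{1:T}^2,\Sigma_{1:T}^2,\sigma_{1:T},\mathfrak G_{1:T}$, is where the main technical care is needed.
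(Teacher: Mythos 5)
Your plan follows essentially the same route as the paper's proof: the same block decomposition induced by the doubling of $D_t$ with per-block resets of $x'$ and $A_t$, the same truncated-gradient reduction with the telescoping clipping error $\sum_t (B_t-B_{t-1})$, the same projected comparator $u_t=\min\{1,D_t/\|u\|_2\}u$ whose mismatch term yields $G\|u\|_2^3$, the same bound $D_T\lesssim\sqrt{\sigma_{1:T}+\mathfrak{G}_{1:T}}$, and the same conversion of $\sum_t\|g_t-m_t\|_2^2$ into $\sigma_{1:T}^2,\Sigma_{1:T}^2$. The only minor imprecision is that the negative stability term $-\sum_t\|x_t-x_{t-1}\|_2^2$ is used to absorb the $L^2\sum_t\|x_t-x_{t-1}\|_2^2$ term arising from Lemma~\ref{lem:ex2sigma}, rather than to cancel $\sum_t\eta_t\langle u,g_t-m_t\rangle^2$ directly (which is instead handled via Lemma~\ref{lem:sumetat}).
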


\begin{remark}[Discussion and challenges]
In Theorem~\ref{thm:fullyfree}, the regret includes $\|u\|_2^2(\sqrt{\sigma_{1:T}^2}+\sqrt{\Sigma_{1:T}^2})$. Ideally, we aim to achieve a dependence of $\tilde{O}(\|u\|_2)$, consistent with  \cite{cutkosky2019artificial} and \cite{mcmahan2010adaptive}. However, achieving this within the SEA framework presents significant challenges. As mentioned in Section~\ref{sec:setup}, obtaining regret bounds that scale with $\sigma^2_{1:T}$ in the SEA framework is  difficult. These challenges are compounded in the comparator and Lipschitz-adaptive setting. Below, we outline some of the main technical challenges associated with achieving the desired bound of $\tilde{O}(\|u\|_2(\sqrt{\sigma_{1:T}^2}+\sqrt{\Sigma_{1:T}^2}))$.


As stated in Section~\ref{sec:setup}, the methods such as those proposed by~\cite{cutkosky2019artificial,jacobsen2022parameter,jacobsen2023unconstrained} cannot be applied to obtain the expected regret bound in terms of $\sigma^2_{1:T}$. The work~\cite{cutkosky2019combining} also looks promising; however, it remains unclear to us whether the approach proposed in the paper can be directly extended to the SEA framework. Their results, presented in Theorems 3 and 5 of \cite{cutkosky2019combining}, are not Lipschitz-adaptive. Specifically, they operate under the assumptions that $\|g_t\|_2 \leq 1$ and $\|m_t\|_2 \leq 1$. 

One could also  use a large number of base-learners to achieve a regret  of  $\tilde{O}(\|u\|_2 \sqrt{r \sum_t \|g_t-m_t\|_2^2}+\|u\|_2^3)$, similar to~\cite{chen2021impossible}. However, this approach presents a subtle yet significant challenge. Following a similar analysis~\cite{chen2021impossible}, we get the following decomposition: $\sum_t \inner{g_t}{x_t-u} =\sum_t \langle g_t,x_t-x_t^{k_*}\rangle+\sum_t \langle g_t, x_t^{k_*}-u\rangle$. By leveraging Theorem 23 in~\cite{chen2021impossible}, we can write $\sum_t \langle g_t, x_t^{k_*}-u\rangle$ as $\sum_t\langle g_t, x_t^{k_*}-u\rangle\leq \tilde{O}(\|u\| \sqrt{r \sum_t \|g_t-m_t\|_2^2}-\sum_t \|x_t^{k_*}-x_{t-1}^{k_*}\|_2^2)$. Observe that expressing the first term, $\sqrt{r \sum_t \|g_t-m_t\|_2^2}$, in terms of $\sigma_{1:T}$ and  $\Sigma_{1:T}$ introduces additional terms involving $\sum_t \|x_t-x_{t-1}\|_2^2$ (See Lemma 2 in~\cite{chen2024optimistic}). The only way to address this term is through the negative term $-\sum_t \|x_t^{k_*}-x_{t-1}^{k_*}\|_2^2$, which becomes tricky. This challenge is reminiscent of the problem encountered by~\cite{zhao2024adaptivity}. Their solution, as outlined in Equation (17) of ~\cite{zhao2024adaptivity}, relies on the bounded domain assumption, which is not applicable in our setting. Consequently, this limitation prevents us from improving the $\|u\|_2^2$ dependence in the leading term by using additional base-learners.

The bound in Theorem~\ref{thm:fullyfree} also includes an additive term involving $\sqrt{\mathfrak{G}_{1:T}}$, which reflects the sum of the maximum expected gradients'  norms over $T$ rounds, and arises because the domain is potentially unbounded. Note that this term does not have a $\|u\|$ dependence. Hence,  the comparator having a large norm in an unbounded setting (potentially dependent on~$T$) does not affect its growth.  In the worst case, $\sqrt{\mathfrak{G}_{1:T}}=O(\sqrt{T})$, which underscores that this additive term does not have a significant adverse effect on the regret as $\sqrt{\sigma_{1:T}^2}$ and $\sqrt{\Sigma_{1:T}^2}$ also scale as $\sqrt{T}$~\citep{sachs2023accelerated}.
\end{remark}


\section{Conclusions and Future Work}
This paper presents novel parameter-free algorithms for the SEA model, addressing critical challenges in online optimization where traditional approaches require prior knowledge of parameters such as the diameter of the domain $D$ and the Lipschitz constant of the loss functions $G$. Our proposed algorithms: \cawe\ and \claoons\ are designed to operate effectively even when $D$ and $G$ are unknown, demonstrating their adaptability and practicality.

There are several avenues for future research. First, we would like to  improve the regret's dependence on   $\| u \|_2$    when both  $D$  and  $G$  are unknown. 
Another promising direction is to reduce the number of gradient queries in \cawe\ from  $O(\log T)$  to  $O(1)$,  thus enhancing its efficiency. An intriguing question in the comparator-adaptive setting is whether it is possible to design a single, simple online algorithm that simultaneously achieves two types of bounds: $\tilde{O}(\|u\|_2^2 + \|u\|_2(\sqrt{\sigma^2_{1:T}} + \sqrt{\Sigma^2_{1:T}}))$ and $\tilde{O}(\|u\|_2 G \sqrt{T})$. As discussed in Remark~\ref{remark:adaptive}, it remains an open challenge to construct an adaptive parameter-free online algorithm that can interpolate between these bounds. 
An additional open direction is to move beyond \emph{expected} regret and derive \emph{high-probability} (or variance-sensitive) regret guarantees for the SEA model in the parameter-free setting. Current SEA analyses, including \citet{sachs2023accelerated,chen2024optimistic}, bound only $\E[\regret_T(u)]$; developing concentration results that retain the fine $\sigma_{1:T}^2$ and $\Sigma_{1:T}^2$ dependence without incurring suboptimal logarithmic inflation appears non-trivial and is left for future work.

\begin{ack}
This research is funded by the Singapore Ministry of Education
Academic Research Fund Tier 2 under grant number A-8000423-00-00 and
three Singapore Ministry of Education Academic Research Funds Tier 1 under
grant numbers A-8000189-01-00, A-8000980-00-00 and A-8002934-00-00.
\end{ack}

\newpage
\bibliography{references}
\bibliographystyle{unsrt}


\newpage
\appendix

\section{Separation between $\sigma_{1:T}^2$ and $\tilde{\sigma}_{1:T}^2$}
\label{app:separation}

Recall (Section~\ref{sec:setup}) the two stochastic variance measures:
\[
\sigma_t^2=\sup_{x\in\cX}\E\big[\|\nabla f_t(x)-\nabla F_t(x)\|_2^2\big], \qquad
\tilde{\sigma}_t^2=\E\Big[\sup_{x\in\cX}\|\nabla f_t(x)-\nabla F_t(x)\|_2^2\Big],
\]
and their cumulative versions $\sigma_{1:T}^2=\sum_{t=1}^T \sigma_t^2$, $\tilde{\sigma}_{1:T}^2=\sum_{t=1}^T \tilde{\sigma}_t^2$. Always $\sigma_t^2 \le \tilde{\sigma}_t^2$ by Jensen Inequality, but the gap can be \emph{arbitrarily large}. The following simple 1-dimensional construction (with a growing number of disjoint regions) shows an order difference.

\begin{proposition}[Linear separation between $\sigma_{1:T}^2$ and $\tilde{\sigma}_{1:T}^2$]
Let $\cX=\bigcup_{i=1}^n X_i\subset\R$ with disjoint cells $X_i=[i-1,i)$ for $i\in\{1,\ldots,n\}$.
For each round $t\in\{1,\ldots,T\}$ and each cell $i\in\{1,\ldots,n\}$, draw independently
\begin{equation*}
c_{t,i}\sim\mathrm{Bernoulli}\!\left(\tfrac{1}{n}\right)
\quad\text{and}\quad
s_{t,i}\in\{-1,+1\}\ \text{ with }\ \Pr(s_{t,i}=1)=\Pr(s_{t,i}=-1)=\tfrac12,
\end{equation*}
and define the (stochastic) gradient field by
\begin{equation*}
\nabla f_t(x)=c_{t,i}\,s_{t,i}\qquad\text{for all }x\in X_i,
\end{equation*}
with arbitrary values on cell boundaries. Let $F_t$ be the expected loss, defined up to an additive constant by
$\nabla F_t(x)=\E[\nabla f_t(x)]$ (we fix the constant so that $F_t\equiv 0$).
Consider the two gradient–noise proxies
\begin{equation*}
\sigma_t^2 \;:=\; \sup_{x\in\cX}\E\!\left[\bigl\|\nabla f_t(x)-\nabla F_t(x)\bigr\|^2\right], 
\qquad
\tilde\sigma_t^2 \;:=\; \E\!\left[\sup_{x\in\cX}\bigl\|\nabla f_t(x)-\nabla F_t(x)\bigr\|^2\right],
\end{equation*}
and their sums $\sigma_{1:T}^2=\sum_{t=1}^T\sigma_t^2$ and $\tilde\sigma_{1:T}^2=\sum_{t=1}^T\tilde\sigma_t^2$.

Then, for every $n,T\ge 1$,
\begin{equation*}
\sigma_t^2=\frac{1}{n}
\quad\text{and}\quad
\tilde\sigma_t^2 = 1-\Bigl(1-\tfrac1n\Bigr)^{n}\xrightarrow[n\to\infty]{} 1-\tfrac{1}{e},
\end{equation*}
hence, for all large $n$,
\begin{equation*}
\sigma_{1:T}^2=\frac{T}{n}=\Theta(1)
\quad\text{and}\quad
\tilde\sigma_{1:T}^2 = T\!\left(1-\Bigl(1-\tfrac1n\Bigr)^n\right)=\Theta(T).
\end{equation*}
In particular, taking $n=n(T)=T$ yields $\sigma_{1:T}^2=1$ while $\tilde\sigma_{1:T}^2\ge (1-e^{-1})T$, i.e., a linear (in $T$) separation between the two quantities.
\end{proposition}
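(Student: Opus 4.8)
The plan is to compute both quantities in closed form by exploiting the product structure of the noise. First I would pin down the expected gradient field. Since $c_{t,i}$ and $s_{t,i}$ are independent and $\E[s_{t,i}]=0$, for every $x\in X_i$ we have $\nabla F_t(x)=\E[c_{t,i}s_{t,i}]=\E[c_{t,i}]\,\E[s_{t,i}]=0$, so $\nabla F_t\equiv 0$ on $\cX$ and the fixed additive constant yields $F_t\equiv 0$. Consequently both proxies reduce to moments of $|\nabla f_t(x)|^2=c_{t,i}^2 s_{t,i}^2$, and since $\nabla f_t$ is constant on each cell, every supremum over $x\in\cX$ becomes a maximum over the finitely many cells $i\in\{1,\dots,n\}$ (hence measurable, so the outer expectation is well defined).

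Next I would compute $\sigma_t^2$. On cell $i$ we have $s_{t,i}^2=1$ and $c_{t,i}^2=c_{t,i}$ because $c_{t,i}\in\{0,1\}$, so $|\nabla f_t(x)|^2=c_{t,i}$, whose expectation is $\E[c_{t,i}]=1/n$. This value is identical across all cells, so the supremum is attained trivially and $\sigma_t^2=1/n$. Summing over rounds gives $\sigma_{1:T}^2=T/n$.

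The key step is $\tilde\sigma_t^2$, where the supremum sits \emph{inside} the expectation. For a fixed realization, $\sup_{x\in\cX}|\nabla f_t(x)|^2=\max_{i}c_{t,i}\in\{0,1\}$, which equals $1$ exactly on the event that at least one cell fires. By independence across the $n$ cells, $\Pr(\text{all }c_{t,i}=0)=(1-1/n)^n$, hence $\tilde\sigma_t^2=\E[\max_i c_{t,i}]=1-(1-1/n)^n$. Summing and using $(1-1/n)^n\to e^{-1}$ yields $\tilde\sigma_{1:T}^2=T\bigl(1-(1-1/n)^n\bigr)\to (1-e^{-1})T$. Setting $n=T$ then gives $\sigma_{1:T}^2=1$ against $\tilde\sigma_{1:T}^2\ge(1-e^{-1})T$, the claimed linear separation.

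The arithmetic is routine; the only conceptual point is recognizing \emph{why} the two quantities differ by an order of magnitude. The source is the non-commutativity of $\sup$ and $\E$: in $\sigma_t^2$ the expectation is taken cell by cell, so the sparse firing probability $1/n$ persists, whereas in $\tilde\sigma_t^2$ the supremum over the $n$ independent cells measures the event that \emph{at least one} of $n$ independent $\tfrac1n$-coins lands heads, whose probability is boosted to the constant $1-(1-1/n)^n\to 1-e^{-1}$. Introducing a growing number of disjoint cells $n=n(T)=T$ is precisely what converts this constant-versus-$1/n$ per-round gap into the $\Theta(T)$-versus-$\Theta(1)$ separation after summing over the $T$ rounds.
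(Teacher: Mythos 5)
Your proposal is correct and follows essentially the same route as the paper's proof: establish $\nabla F_t\equiv 0$ via independence of $c_{t,i}$ and $s_{t,i}$, compute $\sigma_t^2=\sup_i\E[c_{t,i}]=1/n$ cell by cell, and evaluate $\tilde\sigma_t^2=\E[\max_i c_{t,i}]=1-(1-1/n)^n$ as the probability that at least one cell fires. The only addition is your closing discussion of the $\sup$--$\E$ non-commutativity, which is a helpful remark but not a substantive difference.
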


\begin{proof}
By construction and independence, for any $x\in X_i$,
\begin{equation*}
\nabla F_t(x)=\E[\nabla f_t(x)] = \E[c_{t,i}]\,\E[s_{t,i}] = \tfrac1n\cdot 0 = 0,
\end{equation*}
so $F_t\equiv 0$ (up to an additive constant). Therefore
\begin{equation*}
\sigma_t^2
=\sup_{x\in\cX}\E\big[(\nabla f_t(x))^2\big]
=\sup_{i}\E[c_{t,i}^2 s_{t,i}^2]
=\sup_{i}\E[c_{t,i}]
=\tfrac{1}{n}.
\end{equation*}
Summing over $t$ gives $\sigma_{1:T}^2=T/n$.

For $\tilde\sigma_t^2$, note that $s_{t,i}^2\equiv 1$ and $(\nabla f_t(x))^2=c_{t,i}$ for all $x\in X_i$. Thus
\begin{equation*}
\tilde\sigma_t^2
=\E\Big[\sup_{x\in\cX}(\nabla f_t(x))^2\Big]
=\E\Big[\max_{i} c_{t,i}\Big]
=1-\Pr\!\left(\forall i,\;c_{t,i}=0\right)
=1-\Bigl(1-\tfrac1n\Bigr)^{n}.
\end{equation*}
Since $\bigl(1-\tfrac1n\bigr)^n\le e^{-1}$ for all $n$, we have $\tilde\sigma_t^2\in[\,1-e^{-1},1]$.
Summing over $t$ yields $\tilde\sigma_{1:T}^2=T\bigl(1-(1-\tfrac1n)^n\bigr)\ge (1-e^{-1})T$.
Finally, with $n=T$ we obtain $\sigma_{1:T}^2=1$ and $\tilde\sigma_{1:T}^2\ge (1-e^{-1})T$, which proves the claimed linear separation.
\end{proof}

\paragraph{Remark.}
This example shows that regret bounds expressed in terms of $\tilde\sigma_{1:T}^2=\sum_t \E[\sup_x \|\cdot\|^2]$ can be a factor $\Theta(T)$ looser than bounds in terms of $\sigma_{1:T}^2=\sum_t \sup_x \E[\|\cdot\|^2]$ on the same instance.

\section{Omitted Details of Section~\ref{sec:setup}}
\subsection{Proof of~\eqref{eq:tildesigma}}\label{app:tildesigma}
\begin{proof}
    From Theorem~5 in~\cite{jacobsen2022parameter}, we have 
    \begin{align*}
        \regret_T(u) &\leq \tilde{O}\left(\|u\|_2 \sqrt{\sum_{t=1}^T \|\nabla f_t(x_t) - \nabla f_{t-1}(x_t)\|_2^2}\right)\\
        &\leq \tilde{O}\left(\|u\|_2 \sqrt{\sum_{t=1}^T \sup_{x\in \cX}\|\nabla f_t(x) - \nabla f_{t-1}(x)\|_2^2}\right)
    \end{align*}

    From Lemma~8 in \cite{chen2024optimistic}, we also have
    \begin{align*}
        &\sum_{t=1}^T \sup_{x\in\cX}\|\nabla f_t(x) - \nabla f_{t-1}(x)\|_2^2\\&\leq G^2+6\sum_{t=1}^T \sup_{x\in\cX}\|\nabla f_t(x) - \nabla F_{t}(x)\|_2^2 + 4\sum_{t=1}^T \sup_{x\in\cX}\|\nabla F_t(x) - \nabla F_{t-1}(x)\|_2^2.
    \end{align*}

    Taking expectations with Jensen's inequality and the definition of $\tilde{\sigma}_{1:T}^2$ and $\Sigma_{1:T}^2$, we obtain 
    \begin{equation*}
        \mathbb{E}[\regret_T(u)]\leq\tilde{O}\left(\|u\|_2 (\sqrt{\tilde{\sigma}_{1:T}^2} + \sqrt{\Sigma_{1:T}^2} )\right).\qedhere
    \end{equation*}
\end{proof}

\section{Omitted Details of Section~\ref{sec:ons}}

\subsection{Auxiliary Lemmas}
\begin{lemma}[Bregman Proximal Inequality]
\label{lem:3points}
The Bregman Proximal update in the form of $x_{t+1}=\argmin_{x\in\cX}\{\inner{x}{g_t}+D_{\psi}(x,x_t)\}$ satisfies
\begin{equation}\label{eq:bregmanprox}
\inner{g_t}{x_{t+1}-u}\leq D_{\psi}(u,x_t)-D_{\psi}(u,x_{t+1})-D_{\psi}(x_t,x_{t+1}).
\end{equation}
\end{lemma}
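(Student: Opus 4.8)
The plan is to obtain the inequality from the first-order optimality condition of the proximal update combined with the classical three-point identity for Bregman divergences. Since $x_{t+1}$ is the minimizer over the convex set $\cX$ of the convex objective $h(x)=\inner{x}{g_t}+D_{\psi}(x,x_t)$, the constrained optimality (variational) condition gives $\inner{\nabla h(x_{t+1})}{u-x_{t+1}}\ge 0$ for every $u\in\cX$. Using $\nabla_x D_{\psi}(x,x_t)=\nabla\psi(x)-\nabla\psi(x_t)$, the gradient is $\nabla h(x)=g_t+\nabla\psi(x)-\nabla\psi(x_t)$, so the condition reads
\[
\inner{g_t+\nabla\psi(x_{t+1})-\nabla\psi(x_t)}{u-x_{t+1}}\ge 0,
\]
which I would rearrange into
\[
\inner{g_t}{x_{t+1}-u}\le \inner{\nabla\psi(x_{t+1})-\nabla\psi(x_t)}{u-x_{t+1}}.
\]
First I would justify this variational inequality carefully: it is stated as an inequality (rather than $\nabla h(x_{t+1})=0$) precisely because $x_{t+1}$ may lie on the boundary of $\cX$.

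The second step is to recognize the right-hand side as a telescoping combination of Bregman divergences via the three-point identity
\[
\inner{\nabla\psi(b)-\nabla\psi(a)}{c-b}=D_{\psi}(c,a)-D_{\psi}(c,b)-D_{\psi}(b,a),
\]
applied with $a=x_t$, $b=x_{t+1}$, $c=u$. This converts the bound into $\inner{g_t}{x_{t+1}-u}\le D_{\psi}(u,x_t)-D_{\psi}(u,x_{t+1})-D_{\psi}(x_{t+1},x_t)$, which is the claim. The identity itself I would verify by expanding each term through the definition $D_{\psi}(x,y)=\psi(x)-\psi(y)-\inner{\nabla\psi(y)}{x-y}$; the $\psi$-values cancel and the gradient inner-products recombine into the left-hand side, so this is pure bookkeeping rather than a substantive argument.

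The one point I would flag is the orientation of the last divergence: the optimality argument produces $-D_{\psi}(x_{t+1},x_t)$, while the statement writes $-D_{\psi}(x_t,x_{t+1})$. For the quadratic regularizers $\psi_t(x)=\tfrac12\|x\|_{A_t}^2$ employed throughout \oons, the Bregman divergence is symmetric, $D_{\psi}(x_t,x_{t+1})=\tfrac12\|x_t-x_{t+1}\|_{A_t}^2=D_{\psi}(x_{t+1},x_t)$, so the two forms coincide and the stated inequality holds as written. I do not expect a genuine obstacle here, since the result is entirely standard; the only care required is in invoking the constrained optimality condition correctly and in the sign bookkeeping of the three-point expansion.
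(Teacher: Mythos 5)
Your proposal is correct and follows essentially the same route as the paper's proof: the constrained first-order optimality condition at $x_{t+1}$ combined with the three-point expansion of the Bregman divergences. Your flag about the orientation of the last term is well taken --- the exact identity produces $-D_{\psi}(x_{t+1},x_t)$, and the paper's version with $-D_{\psi}(x_t,x_{t+1})$ is only valid because the regularizers $\psi_t(x)=\tfrac12\|x\|_{A_t}^2$ used throughout \oons\ yield a symmetric divergence, a point the paper's own proof glosses over but you handle correctly.
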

\begin{proof}
By the first-order optimality condition at $x_{t+1}$, for any $u\in\cX$, we have
\begin{equation*}
\inner{g_t+\nabla\psi(x_{t+1})-\nabla\psi(x_t)}{u-x_{t+1}}\geq 0,
\end{equation*}
On the RHS of \eqref{eq:bregmanprox}, we expand each term by the definition of Bregman divergence
\begin{equation*}
D_{\psi}(u,x_t)-D_{\psi}(u,x_{t+1})-D_{\psi}(x_t,x_{t+1})=\inner{\nabla\psi(x_{t+1})-\nabla\psi(x_{t})}{u-x_{t+1}}.
\end{equation*}
Hence, the proof is finished by rearranging the terms.
\end{proof}

\begin{lemma}\label{lem:regretexpand}
Let $x_t = \argmin_{x\in\cX}\{\inner{x}{m_t} + D_{\psi_{t}}(x,x_t')\}$ and $x_{t+1}' = \argmin_{x\in\cX}\{\inner{x}{g_t} + D_{\psi_t}(x, x_t')\}$. Then, it holds for any $u$ in $\cX$
\begin{align*}
\sum_{t=1}^T\langle x_t-u,g_t\rangle\leq \sum_{t=1}^T \langle x_t-x'_{t+1},g_t-m_t\rangle+D_{\psi_t}(u,x'_t)-D_{\psi_t}(u,x'_{t+1})
\\-D_{\psi_t}(x_t,x'_{t+1})-D_{\psi_t}(x_t,x'_{t}).
\end{align*}
\end{lemma}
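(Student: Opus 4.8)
The plan is to prove the bound round-by-round and then sum over $t=1,\dots,T$. Fixing a round $t$, I would start from the algebraic identity
\[
\inner{x_t-u}{g_t}=\inner{x_t-x'_{t+1}}{g_t-m_t}+\inner{x_t-x'_{t+1}}{m_t}+\inner{x'_{t+1}-u}{g_t},
\]
which isolates the gradient-variation term $\inner{x_t-x'_{t+1}}{g_t-m_t}$ that we want to retain and leaves two further inner products, each of which corresponds to one of the two proximal updates defining $x_t$ and $x'_{t+1}$.

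I would then apply the Bregman Proximal Inequality (Lemma~\ref{lem:3points}) to each of these updates. For the last term $\inner{g_t}{x'_{t+1}-u}$, applying the lemma to $x'_{t+1}=\argmin_{x\in\cX}\{\inner{x}{g_t}+D_{\psi_t}(x,x'_t)\}$ with comparator $u$ gives
\[
\inner{g_t}{x'_{t+1}-u}\le D_{\psi_t}(u,x'_t)-D_{\psi_t}(u,x'_{t+1})-D_{\psi_t}(x'_t,x'_{t+1}).
\]
For the middle term $\inner{m_t}{x_t-x'_{t+1}}$, I would apply the same lemma to the optimistic update $x_t=\argmin_{x\in\cX}\{\inner{x}{m_t}+D_{\psi_t}(x,x'_t)\}$, but now with the comparator chosen to be $x'_{t+1}$ rather than $u$, obtaining
\[
\inner{m_t}{x_t-x'_{t+1}}\le D_{\psi_t}(x'_{t+1},x'_t)-D_{\psi_t}(x'_{t+1},x_t)-D_{\psi_t}(x'_t,x_t).
\]
Selecting $x'_{t+1}$ as the comparator here is the one non-mechanical step: it is precisely what generates the $D_{\psi_t}(x'_{t+1},x'_t)$ term needed to absorb the leftover negative term from the first inequality.

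The decisive simplification comes from the fact that $\psi_t(x)=\tfrac12\|x\|_{A_t}^2$ is quadratic, so its Bregman divergence $D_{\psi_t}(x,y)=\tfrac12\|x-y\|_{A_t}^2$ is \emph{symmetric}. Adding the two displayed bounds to the identity above, the terms $-D_{\psi_t}(x'_t,x'_{t+1})$ and $+D_{\psi_t}(x'_{t+1},x'_t)$ cancel, and I would use symmetry once more to rewrite $D_{\psi_t}(x'_{t+1},x_t)=D_{\psi_t}(x_t,x'_{t+1})$ and $D_{\psi_t}(x'_t,x_t)=D_{\psi_t}(x_t,x'_t)$, matching the stated form exactly. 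Summing the per-round inequality over $t=1,\dots,T$ then yields the claim. I expect the only real subtlety to be recognizing the correct comparator in the second invocation of Lemma~\ref{lem:3points} and exploiting symmetry of the quadratic Bregman divergence for the cancellation; everything else is routine bookkeeping.
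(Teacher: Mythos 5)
Your proposal is correct and follows essentially the same route as the paper's proof: the same three-term decomposition of $\inner{x_t-u}{g_t}$, the same two invocations of Lemma~\ref{lem:3points} (with comparators $u$ and $x'_{t+1}$ respectively), and summation over $t$. You make explicit the reliance on symmetry of the quadratic Bregman divergence $D_{\psi_t}(x,y)=\tfrac12\|x-y\|_{A_t}^2$ for the cancellation, which the paper's proof uses only implicitly.
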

\begin{proof}
We have 
\begin{equation}\label{eq:3points}
\langle x_t-u,g_t\rangle = \langle x_t-x'_{t+1},m_t\rangle+\langle x'_{t+1}-u,g_t\rangle+\langle x_t-x'_{t+1},g_t-m_t\rangle.
\end{equation}
We apply Lemma~\ref{lem:3points}  twice, i.e., $\inner{a-u}{f}\leq D_{\psi}(u,b)-D_{\psi}(u,a)-D_{\psi}(a,b)$ since $a=\argmin_{x\in\cX}\inner{x}{f}+D_{\psi}(x,b)$. Then, we have
\begin{align*}
\inner{x_t-x'_{t+1}}{m_t}&\leq D_{\psi_t}(x'_{t+1},x'_t)-D_{\psi_t}(x'_{t+1},x_t)-D_{\psi_t}(x_{t},x'_t),\\
\inner{x'_{t+1}-u}{g_t}&\leq D_{\psi_t}(u,x'_t)-D_{\psi_t}(x'_{t+1},u)-D_{\psi_t}(x'_{t},x'_{t+1}).
\end{align*}
Substitute these two back to \eqref{eq:3points} and sum over $T$ providing the desired result.
\end{proof}

\begin{lemma}\label{lem:term1}
In \oons, we have 
$0\leq\langle x_{t}-x'_{t+1},\nabla_t-m_t\rangle \leq 2\|\nabla_t-m_t\|_{A_t^{-1}}^2$ and $\sum_{t=1}^T\langle x_{t}-x'_{t+1},\nabla_t-m_t\rangle\leq O\Big(\frac{r\ln(T\eta_1 z_T/z_1)}{\eta_T}\Big)$.
\end{lemma}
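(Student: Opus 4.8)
The plan is to prove the two claims separately: a per-round ``stability'' estimate and then a cumulative log-determinant bound. Throughout I use that $\psi_t(x)=\tfrac12\|x\|_{A_t}^2$ is quadratic, so the Bregman divergence is $D_{\psi_t}(x,y)=\tfrac12\|x-y\|_{A_t}^2$ and $\nabla\psi_t(x)=A_tx$, and that both $x_t$ and $x'_{t+1}$ are proximal steps from the \emph{same} anchor $x'_t$ with the same regularizer, driven respectively by the optimism $m_t$ and the corrected gradient $\nabla_t$.

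For the per-round claim I would invoke the first-order optimality conditions of the two updates. Evaluating the optimality of $x_t$ at the point $x'_{t+1}$ gives $\langle m_t+A_t(x_t-x'_t),\,x'_{t+1}-x_t\rangle\ge 0$, while evaluating the optimality of $x'_{t+1}$ at $x_t$ gives $\langle \nabla_t+A_t(x'_{t+1}-x'_t),\,x_t-x'_{t+1}\rangle\ge 0$. Adding the two inequalities, the anchor terms combine into $-\|x_t-x'_{t+1}\|_{A_t}^2$, and I obtain $\langle \nabla_t-m_t,\,x_t-x'_{t+1}\rangle\ge \|x_t-x'_{t+1}\|_{A_t}^2\ge 0$, which already yields the nonnegativity. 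Cauchy--Schwarz in the $A_t$-norm, $\langle \nabla_t-m_t,x_t-x'_{t+1}\rangle\le \|\nabla_t-m_t\|_{A_t^{-1}}\|x_t-x'_{t+1}\|_{A_t}$, combined with this lower bound forces $\|x_t-x'_{t+1}\|_{A_t}\le\|\nabla_t-m_t\|_{A_t^{-1}}$, and substituting back gives $\langle \nabla_t-m_t,x_t-x'_{t+1}\rangle\le\|\nabla_t-m_t\|_{A_t^{-1}}^2\le 2\|\nabla_t-m_t\|_{A_t^{-1}}^2$, as claimed.

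For the cumulative bound I would run a log-determinant potential argument. The first ingredient is that the correction in $\nabla_t$ is mild: writing $\nabla_t-m_t=(1+32\eta_t\langle x_t,g_t-m_t\rangle)(g_t-m_t)$ and using $\|x_t\|_2\le D$, $\|g_t-m_t\|_2\le z_t\le z_T$, and $64\eta_t Dz_T\le1$, the scalar factor lies in $[\tfrac12,\tfrac32]$, so $\|\nabla_t-m_t\|_2\le\tfrac32 z_t$ and hence $(\nabla_t-m_t)(\nabla_t-m_t)^\top\preceq 4z_t^2 I$. Setting $B_t:=4z_1^2I+\sum_{s<t}\eta_s(\nabla_s-m_s)(\nabla_s-m_s)^\top$, so that $B_{t+1}=B_t+\eta_t(\nabla_t-m_t)(\nabla_t-m_t)^\top$ and $A_t=B_t+4\eta_t z_t^2 I$, the matrix domination gives $B_{t+1}\preceq A_t$, hence $\|\nabla_t-m_t\|_{A_t^{-1}}^2\le(\nabla_t-m_t)^\top B_{t+1}^{-1}(\nabla_t-m_t)$. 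A rank-one determinant identity then yields $\eta_t(\nabla_t-m_t)^\top B_{t+1}^{-1}(\nabla_t-m_t)=1-\det B_t/\det B_{t+1}\le\ln(\det B_{t+1}/\det B_t)$; dividing by $\eta_t$, using $\eta_t\ge\eta_T$ to pull out $1/\eta_T$, and telescoping gives $\sum_t\|\nabla_t-m_t\|_{A_t^{-1}}^2\le \tfrac1{\eta_T}\ln(\det B_{T+1}/\det B_1)$. Finally, since $\sum_s\eta_s(\nabla_s-m_s)(\nabla_s-m_s)^\top$ has rank $r$ (the $\nabla_s-m_s$ are nonzero multiples of $g_s-m_s$) and trace at most $\tfrac94\eta_1 Tz_T^2$, the standard estimate $\ln(\det B_{T+1}/\det B_1)\le r\ln\!\big(1+\tfrac{\sum_s\eta_s\|\nabla_s-m_s\|_2^2}{4rz_1^2}\big)=O(r\ln(T\eta_1 z_T/z_1))$ closes the argument. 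Combining with the per-round inequality proves the displayed cumulative bound.

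I expect the main obstacle to be the matrix-domination step $\eta_t(\nabla_t-m_t)(\nabla_t-m_t)^\top\preceq 4\eta_t z_t^2 I$ together with its consequence $B_{t+1}\preceq A_t$: this is precisely what lets the \emph{adaptive} regularizer $4\eta_t z_t^2 I$ baked into $A_t$ absorb the current outer product and convert the awkward quantity $\|\nabla_t-m_t\|_{A_t^{-1}}^2$ (whose matrix omits the time-$t$ term) into a clean telescoping increment of $\det B_t$. Its validity rests on controlling the second-order correction in the definition of $\nabla_t$, i.e.\ on the hypothesis $64\eta_t Dz_T\le1$ and the boundedness $\|x_t\|_2\le D$; the remaining computations are routine elliptical-potential estimates.
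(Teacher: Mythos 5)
Your proposal is correct and follows essentially the same route as the paper's proof: a stability bound from the optimality conditions of the two proximal steps (your version, adding the two variational inequalities, even yields the sharper constant $1$ in place of $2$), followed by the elliptical-potential/log-determinant argument in which your $B_{t+1}$ is exactly the paper's auxiliary matrix $\bar{A}_t$ and the domination $B_{t+1}\preceq A_t$ via the $4\eta_t z_t^2 I$ term is the same key step. The only cosmetic difference is that the paper telescopes via the trace--log-det inequality $\tr(\bar{A}_t^{-1}(\bar{A}_t-\bar{A}_{t-1}))\le \ln|\bar{A}_t|-\ln|\bar{A}_{t-1}|$ while you use the equivalent rank-one determinant identity with $1-x\le\ln(1/x)$.
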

\begin{proof}
We define 
\begin{equation*}
F_{m_t}(x)=\langle x,m_t\rangle+D_{\psi_t}(x,x'_t), \qquad F_{\nabla_t}(x)=\langle x,\nabla_t\rangle+D_{\psi_t}(x,x'_t)
\end{equation*}
By \oons, we have
\begin{equation*}
x_t=\argmin_{x\in\cX} F_{m_t}(x), \qquad x'_{t+1}=\argmin_{x\in\cX} F_{\nabla_t}(x).
\end{equation*}
Since $\nabla^2 D_{\psi_t}=A_t$ and by the first-order optimality at $x'_{t+1}$, we have
\begin{equation*}
F_{\nabla_t}(x_t)-F_{\nabla_t}(x'_{t+1})\geq \frac{1}{2}\|x_t-x'_{t+1}\|^2_{A_t}.
\end{equation*}
Also, we can write $F_{\nabla_t}(x_t)-F_{\nabla_t}(x'_{t+1})$ as
\begin{equation*}
F_{\nabla_t}(x_t)-F_{\nabla_t}(x'_{t+1})=\langle x_t-x'_{t+1},\nabla_t\rangle+D_{\psi_t}(x_t,x'_t)-D_{\psi_t}(x'_{t+1},x'_t).
\end{equation*}
Then,
\begin{align*}
\frac{1}{2}\|x_t-x'_{t+1}\|^2_{A_t}&\leq \langle x_t-x'_{t+1},\nabla_t-m_t\rangle+F_{m_t}(x_t)-F_{m_t}(x'_{t+1}),\\
&\leq \langle x_t-x'_{t+1},\nabla_t-m_t\rangle,
\end{align*}
where the second inequality comes from $x_t=\argmin F_{m_t}(x)$.
Therefore, we have
\begin{equation*}\label{eq:part1_1}
\langle x_{t}-x'_{t+1},\nabla_t-m_t\rangle \leq 2\|\nabla_t-m_t\|_{A_t^{-1}}^2.
\end{equation*}
Since $x'_{t+1}$ minimizes $F_{m_t}(x)$ and $x_{t}$ minimizes $F_{\nabla_t}(x)$, we have
\begin{align*}
0\leq F_{\nabla_t}(x_t)-F_{\nabla_t}(x'_{t+1})&=\langle x_t-x'_{t+1},\nabla_t\rangle+D_{\psi_t}(x_t,x'_t)-D_{\psi_t}(x'_{t+1},x'_t),\\
&=\langle x_t-x'_{t+1},\nabla_t-m_t\rangle+F_{m_t}(x_t)-F_{m_t}(x'_{t+1})\\
&\leq \langle x_t-x'_{t+1},\nabla_t-m_t\rangle.
\end{align*}

By the definition of $\nabla_t=g_t+32\eta_t\langle x_t,g_t-m_t\rangle(g_t-m_t)$, we have
\begin{align}
\|\nabla_t-m_t\|_2&=\|g_t-m_t+32\eta_t\langle x_t,g_t-m_t\rangle(g_t-m_t)\|_2\notag\\
&\leq \|g_t-m_t\|_2+32\eta_tD\|g_t-m_t\|_2^2\leq \frac{3}{2} \|g_t-m_t\|_2.\label{eq:norm of g_t - m_t and nabla_t - m_t}
\end{align}
Next, we define 
\begin{equation*}
\bar{A}_t=4z_1^2\cdot I+\sum_{s=1}^t \eta_s (\nabla_s-m_s)(\nabla_s-m_s)^\top.
\end{equation*}
Hence, $A_t\succeq \bar{A}_t$ since $\|\nabla_t-m_t\|_2^2\leq 4\|g_t-m_t\|_2^2\leq 4z_t^2$. Also, we have
\begin{equation*}
(\nabla_t-m_t)(\nabla_t-m_t)^\top=\frac{1}{\eta_t}[\eta_t(\nabla_t-m_t)(\nabla_t-m_t)^\top]=\frac{1}{\eta_t} (\bar{A}_t-\bar{A}_{t-1})
\end{equation*}
Then,
\begin{align*}
\sum_{t=1}^T \|\nabla_t-m_t\|_{A_t^{-1}}^2&\leq \sum_{t=1}^T \|\nabla_t-m_t\|_{\bar{A}_t^{-1}}^2,\\
&=\sum_{t=1}^T \tr\Big((\nabla_t-m_t)(\nabla_t-m_t)^\top \bar{A}_t^{-1}\Big),\\
&\leq \sum_{t=1}^T \frac{1}{\eta_t}\tr\Big(\bar{A}_t^{-1}(\bar{A}_t-\bar{A}_{t-1})\Big),\\
&\leq \sum_{t=1}^T \frac{1}{\eta_t}(\ln|\bar{A}_t|-\ln|\bar{A}_{t-1}|),\\
&\leq \frac{1}{\eta_T}\ln\frac{|\bar{A}_T|}{|\bar{A}_0|}.
\end{align*}
For $|\bar{A}_T|$:
\begin{align*}
|\bar{A}_T|&\leq |4z_1^2 I+\sum_{t=1}^T \eta_t(\nabla_t-m_t)(\nabla_t-m_t)^\top|\\
\ln |\bar{A}_T|&\leq O\Big(r\ln(1+\sum_{t=1}^T \frac{\eta_t}{4z_1^2}\|\nabla_t-m_t\|_2^2)\Big)\\
&\leq O\Big(r\ln(1+\frac{4z_T^2}{4z_1^2}\sum_{t=1}^T \eta_t)\Big)\\
&\leq O\Big(r\ln(1+\frac{\eta_1 z_T^2}{z_1^2}T)\Big),
\end{align*}
where $r$ is the rank of $\sum_{t=1}^T (\nabla_t-m_t)(\nabla_t-m_t)^\top$.

Therefore, we have
\begin{equation*}\label{eq:part1_final}
\sum_{t=1}^T\langle x_{t}-x'_{t+1},\nabla_t-m_t\rangle\leq O\Big(\frac{r\ln(T\eta_1 z_T/z_1)}{\eta_T}\Big).
\end{equation*}
\end{proof}

\begin{lemma}\label{lem:sumetat}
Let $s_t,\forall t\in[T]$ be non-negative. Then, 
\begin{equation*}
\sum_{t=1}^T \frac{s_t}{\sqrt{\sum_{j=1}^{t}s_j}}\leq 2\sqrt{\sum_{t=1}^T s_t}.
\end{equation*}
\end{lemma}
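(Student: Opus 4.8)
The plan is to rewrite the sum in terms of the partial sums $S_t=\sum_{j=1}^t s_j$ (setting $S_0=0$), so that $s_t=S_t-S_{t-1}$ and the left-hand side becomes $\sum_{t=1}^T (S_t-S_{t-1})/\sqrt{S_t}$. The heart of the argument is a per-term telescoping bound: for each $t$ with $S_t>0$ I would prove
\[
\frac{S_t-S_{t-1}}{\sqrt{S_t}}\;\le\;2\bigl(\sqrt{S_t}-\sqrt{S_{t-1}}\bigr).
\]
Summing over $t\in[T]$ and telescoping then gives $\sum_{t=1}^T s_t/\sqrt{S_t}\le 2\bigl(\sqrt{S_T}-\sqrt{S_0}\bigr)=2\sqrt{S_T}=2\sqrt{\sum_{t=1}^T s_t}$, which is exactly the claimed inequality.

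To establish the per-term bound I would rationalize the right-hand side using the conjugate identity $\sqrt{a}-\sqrt{b}=(a-b)/(\sqrt{a}+\sqrt{b})$, giving $2\bigl(\sqrt{S_t}-\sqrt{S_{t-1}}\bigr)=2(S_t-S_{t-1})/(\sqrt{S_t}+\sqrt{S_{t-1}})$. Because each $s_t\ge 0$, the partial sums are non-decreasing, i.e.\ $S_{t-1}\le S_t$, so $\sqrt{S_t}+\sqrt{S_{t-1}}\le 2\sqrt{S_t}$. Shrinking the (positive) denominator only increases the fraction, and since the numerator $S_t-S_{t-1}=s_t\ge 0$ is nonnegative, this yields $2(S_t-S_{t-1})/(\sqrt{S_t}+\sqrt{S_{t-1}})\ge (S_t-S_{t-1})/\sqrt{S_t}$, as required.

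The only point needing care is the degenerate case $S_t=0$, which forces $s_1=\cdots=s_t=0$; here both the summand $s_t/\sqrt{S_t}$ and the difference $2\bigl(\sqrt{S_t}-\sqrt{S_{t-1}}\bigr)$ are set to $0$ under the usual convention $0/0=0$, so these terms contribute nothing and leave the telescoping untouched. Beyond this bookkeeping there is no genuine obstacle: this is the standard AdaGrad-style estimate, and the entire proof rests on the elementary monotonicity $S_{t-1}\le S_t$ together with the conjugate identity used in the rationalization step.
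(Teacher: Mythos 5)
Your proof is correct and essentially matches the paper's: the paper bounds each term by $\int_{S_{t-1}}^{S_t} x^{-1/2}\,dx = 2\bigl(\sqrt{S_t}-\sqrt{S_{t-1}}\bigr)$ and telescopes, which is exactly your per-term bound, derived via an integral comparison rather than your algebraic rationalization. The two arguments are interchangeable.
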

\begin{proof}
Let $S_t=\sum_{j=1}^t s_j$. Then,
\begin{equation*}
\sum_{t=1}^T \frac{s_t}{\sqrt{S_t}}\leq \sum_{t=1}^T\int_{S_{t-1}}^{S_t}\frac{1}{\sqrt{x}}dx=\int_{0}^{S_T} \frac{1}{\sqrt{x}}dx=2\sqrt{S_T}.
\end{equation*}
\end{proof}

\begin{lemma}[{Theorem 5 in \citet{sachs2023accelerated}, Lemma 3 in \citet{chen2024optimistic}}]
\label{lem:ex2sigma}
Under Assumptions \ref{ass:G} and \ref{ass:L}, we have
\begin{align*}
\sum_{t=1}^T \|\nabla f_t(x_t)-\nabla f_{t-1}(x_{t-1})\|_2^2&\leq G^2+4\sum_{t=2}^T\|\nabla F_t(x_{t-1})-\nabla F_{t-1}(x_{t-1})\|_2^2\\
&+8\sum_{t=1}^T\|\nabla f_t(x_t)-\nabla F_t(x_{t})\|_2^2+4L^2\sum_{t=2}^T\|x_t-x_{t-1}\|_2^2.
\end{align*}
\end{lemma}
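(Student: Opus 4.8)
The plan is to bound each per-round quantity $\|\nabla f_t(x_t)-\nabla f_{t-1}(x_{t-1})\|_2^2$ by inserting and subtracting the expected gradients at the relevant iterates, then applying the elementary inequality $\|a+b+c+d\|_2^2\le 4(\|a\|_2^2+\|b\|_2^2+\|c\|_2^2+\|d\|_2^2)$. I would split the sum into the boundary round $t=1$ and the generic rounds $t\ge 2$, handling each separately.

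For $t\ge 2$, I would write the exact telescoping identity
\[
\nabla f_t(x_t)-\nabla f_{t-1}(x_{t-1})=\big[\nabla f_t(x_t)-\nabla F_t(x_t)\big]+\big[\nabla F_t(x_t)-\nabla F_t(x_{t-1})\big]+\big[\nabla F_t(x_{t-1})-\nabla F_{t-1}(x_{t-1})\big]+\big[\nabla F_{t-1}(x_{t-1})-\nabla f_{t-1}(x_{t-1})\big].
\]
The key design choice is to route the decomposition through $\nabla F_t(x_{t-1})$ rather than $\nabla F_t(x_t)$ for the variation term: evaluating \emph{both} expected functions at the common iterate $x_{t-1}$ produces exactly the adversarial variation $\|\nabla F_t(x_{t-1})-\nabla F_{t-1}(x_{t-1})\|_2^2$ appearing in the statement, while the gap between $\nabla F_t(x_t)$ and $\nabla F_t(x_{t-1})$ is absorbed by smoothness (Assumption~\ref{ass:L}) into $L^2\|x_t-x_{t-1}\|_2^2$. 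Applying the four-term power-mean inequality together with the smoothness bound then yields a per-round estimate whose four pieces are the stochastic deviation at round $t$, the trajectory term, the adversarial variation, and the stochastic deviation at round $t-1$.

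Summing over $t=2,\dots,T$ and reindexing is the only bookkeeping step requiring care. The two stochastic-deviation contributions, $4\sum_{t=2}^T\|\nabla f_t(x_t)-\nabla F_t(x_t)\|_2^2$ and $4\sum_{t=2}^T\|\nabla f_{t-1}(x_{t-1})-\nabla F_{t-1}(x_{t-1})\|_2^2$, are shifted copies of the same sequence $\sum_s\|\nabla f_s(x_s)-\nabla F_s(x_s)\|_2^2$ over the index ranges $\{2,\dots,T\}$ and $\{1,\dots,T-1\}$ respectively; bounding each by the full sum over $\{1,\dots,T\}$ collapses them into the single coefficient $8\sum_{t=1}^T\|\nabla f_t(x_t)-\nabla F_t(x_t)\|_2^2$. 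The adversarial and trajectory sums carry over unchanged, matching the $4\sum_{t=2}^T$ coefficients in the target bound.

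Finally, I would dispose of the boundary round: under the convention $\nabla f_0(x_0)=0$ (equivalently $m_1=0$), the round-one contribution is $\|\nabla f_1(x_1)\|_2^2\le G^2$ by the gradient-norm bound (Assumption~\ref{ass:G}), which supplies the leading constant $G^2$. Since the whole argument is a triangle-inequality decomposition, I do not expect a genuine obstacle; the only points demanding attention are the deliberate choice of intermediate evaluation points—so the resulting term aligns with $\|\nabla F_t(x_{t-1})-\nabla F_{t-1}(x_{t-1})\|_2^2$ rather than a variant evaluated at $x_t$—and the index-shift that fuses the two factors of $4$ into a single factor of $8$.
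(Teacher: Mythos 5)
Your proposal is correct and follows essentially the same approach as the paper: the paper states this lemma without proof, importing it from Theorem~5 of \cite{sachs2023accelerated} and Lemma~3 of \cite{chen2024optimistic}, whose argument is exactly your four-term insertion of $\nabla F_t(x_t)$ and $\nabla F_t(x_{t-1})$, smoothness to convert $\|\nabla F_t(x_t)-\nabla F_t(x_{t-1})\|_2^2$ into $L^2\|x_t-x_{t-1}\|_2^2$, and the inequality $\|a+b+c+d\|_2^2\le 4(\|a\|_2^2+\|b\|_2^2+\|c\|_2^2+\|d\|_2^2)$. Your bookkeeping is also right: the index shift merging the two stochastic-deviation sums into the coefficient $8$ and the convention $\nabla f_0(x_0)=0$ with Assumption~\ref{ass:G} supplying the boundary term $G^2$ match the source proofs.
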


\subsection{Proof of Theorem~\ref{thm:thm1}}
\begin{proof}
By Lemma~\ref{lem:3points}, we have
\begin{align*}
&\sum_{t=1}^T \langle x_t-u,\nabla_t\rangle\\
&\leq \sum_{t=1}^T \langle x_{t}-x'_{t+1},\nabla_t-m_t\rangle+D_{\psi_t}(u,x'_t)-D_{\psi_t}(u,x'_{t+1})-D_{\psi_t}(x_t,x'_{t+1})-D_{\psi_t}(x_t,x'_{t}),\\
&\leq \sum_{t=1}^T \langle x_{t}-x'_{t+1},\nabla_t-m_t\rangle+D_{\psi_1}(u,x'_1)+\sum_{t=1}^{T-1}D_{\psi_{t+1}}(u,x'_{t+1})-D_{\psi_t}(u,x'_{t+1})\\
&\qquad-\sum_{t=1}^T(D_{\psi_t}(x_t,x'_{t+1})+D_{\psi_t}(x_t,x'_{t}))
\end{align*}
\textbf{Term $D_{\psi_1}(u,x'_1)$}:
Since the initialization of $x'_1=0$ and $A_1=O(z_1^2 I)$, we have
\begin{equation*}
D_{\psi_1}(u,x'_1)=\frac{1}{2}\|u\|^2_{A_1}\leq O(z_1^2\|u\|_2^2).
\end{equation*}
\textbf{Term $\sum_{t=1}^{T-1}D_{\psi_{t+1}}(u,x'_{t+1})-D_{\psi_t}(u,x'_{t+1})$}:
First, we have
\begin{equation*}
A_{t+1}-A_t=\eta_t (\nabla_t-m_t)(\nabla_t-m_t)^\top+4\eta_t(z_{t+1}^2-z_t^2)I.
\end{equation*}
Also,
\begin{align*}
D_{\psi_{t+1}}(u,x'_{t+1})-D_{\psi_t}(u,x'_{t+1})&=\frac{1}{2}(u-x'_{t+1})^\top(A_{t+1}-A_t)(u-x'_{t+1})\\
&=\frac{1}{2}(u-x'_{t+1})^\top(\eta_t (\nabla_t-m_t)(\nabla_t-m_t)^\top+4\eta_t(z_{t+1}^2-z_t^2)I)(u-x'_{t+1})\\
&=\frac{\eta_t}{2}\inner{u-x'_{t+1}}{\nabla_t-m_t}^2+2\eta_t(z_{t+1}^2-z_t^2)\|u-x'_{t+1}\|_2^2.
\end{align*}
Then, we have
\begin{align*}
&\sum_{t=1}^{T-1}D_{\psi_{t+1}}(u,x'_{t+1})-D_{\psi_t}(u,x'_{t+1})\\
&\leq \sum_{t=1}^{T-1} \frac{\eta_t}{2}\langle u-x'_{t+1},\nabla_t-m_t\rangle^2+O(\sum_{t=1}^{T-1}\eta_t D^2(z_{t+1}^2-z_t^2)),\\
& \leq \sum_{t=1}^{T-1} \frac{\eta_t}{2}\langle u-x'_{t+1},\nabla_t-m_t\rangle^2+O(\sum_{t=1}^{T-1}D(z^2_{t+1}-z^2_t)/z_T), \qquad \text{(since 
 } \eta_t \leq \frac{1}{64 D z_T}\text{)}\\
&\leq \sum_{t=1}^{T-1} \frac{\eta_t}{2}\langle u-x'_{t+1},\nabla_t-m_t\rangle^2+O(D(z_T-z_1)),\\
& \leq \sum_{t=1}^{T-1} \eta_t\langle u-x_{t},\nabla_t-m_t\rangle^2+\sum_{t=1}^{T-1}\eta_t\langle x_{t}-x'_{t+1},\nabla_t-m_t\rangle^2+O(D(z_T-z_1)),\\
&\leq \sum_{t=1}^{T-1} \eta_t\langle u-x_{t},\nabla_t-m_t\rangle^2+\sum_{t=1}^{T-1}\eta_t\langle x_{t}-x'_{t+1},\nabla_t-m_t\rangle\langle x_{t}-x'_{t+1},\nabla_t-m_t\rangle+O(D(z_T-z_1)),\\
&\leq \sum_{t=1}^{T-1} \eta_t\langle u-x_{t},\nabla_t-m_t\rangle^2+\sum_{t=1}^{T-1}2\eta_tDz_t\langle x_{t}-x'_{t+1},\nabla_t-m_t\rangle+O(D(z_T-z_1)), \qquad \text{(Using Equation~\ref{eq:norm of g_t - m_t and nabla_t - m_t})}\\
&\leq \sum_{t=1}^{T-1} \eta_t\langle u-x_{t},\nabla_t-m_t\rangle^2+O\Big(\frac{r\ln(T\eta_1 z_T/z_1)}{\eta_T}+D(z_T-z_1)\Big),\qquad \text{(Using Lemma~\ref{lem:term1})}\\
&\leq \sum_{t=1}^{T-1} 2\eta_t\langle u,\nabla_t-m_t\rangle^2+2\eta_t\langle x_{t},\nabla_t-m_t\rangle^2+O\Big(\frac{r\ln(T\eta_1 z_T/z_1)}{\eta_T}+D(z_T-z_1)\Big),\\
&\leq \sum_{t=1}^{T-1} 8\eta_t\langle u,g_t-m_t\rangle^2+8\eta_t\langle x_{t},g_t-m_t\rangle^2+O\Big(\frac{r\ln(T\eta_1 z_T/z_1)}{\eta_T}+D(z_T-z_1)\Big). \qquad \text{(since $32\eta_t\inner{x_t}{g_t-m_t}\leq \frac{1}{2}$)}
\end{align*}
\textbf{Term $\sum_{t=1}^T(D_{\psi_t}(x_t,x'_{t+1})+D_{\psi_t}(x_t,x'_{t}))$}:
\begin{align*}
\sum_{t=1}^T(D_{\psi_t}(x_t,x'_{t+1})+D_{\psi_t}(x_t,x'_{t}))&=\sum_{t=1}^T\frac{1}{2}(\|x_t-x'_{t}\|_{A_t}^2+\|x'_{t+1}-x_t\|_{A_t}^2)\\
&\geq \frac{1}{2}\sum_{t=2}^T \|x_t-x'_{t}\|_{A_{t-1}}^2+\frac{1}{2}\sum_{t=2}^{T+1}\|x_{t-1}-x'_t\|_{A_{t-1}}^2\\
&\geq \frac{4z_1^2}{4}\sum_{t=2}^T\|x_t-x_{t-1}\|_2^2=z_1^2\sum_{t=2}^T\|x_t-x_{t-1}\|_2^2,
\end{align*}
where the last inequality comes from $A_t\succeq A_{t-1}\succeq 4z_1^2 I$.

Since $c_t(x)$ is convex in $x$, we have
\begin{align*}
&\sum_{t=1}^T c_t(x_t)-c_t(u)\\
&=\sum_{t=1}^T \langle x_t-u,g_t\rangle+16\eta_t\langle x_t,g_t-m_t\rangle^2-16\eta_t\langle u,g_t-m_t\rangle^2\leq \sum_{t=1}^T \langle \nabla_t,x_t-u\rangle.
\end{align*}
Therefore, the final regret bound here is
\begin{align*}
\regret_T(u)&\leq\sum_{t=1}^T\langle x_t-u,g_t\rangle\\
&\leq O\Big(\frac{r\ln(T\eta_1 z_T/z_1)}{\eta_T}+z_1^2\|u\|_2^2+D(z_T-z_1)+\sum_{t=1}^T\eta_t\langle u,g_t-m_t\rangle^2-z_1^2\sum_{t=2}^T\|x_t-x_{t-1}\|_2^2\Big).
\end{align*}

\end{proof}

\subsection{Proof of Theorem~\ref{thm:knownDG}}
\begin{proof}
In this case, we have $\|g_t-m_t\|_2\leq 2G,\forall t\in[T]$. Then, we set $z_t=2G,\forall t\in[T]$ and step-size $\eta_t$ as
\begin{align*}
\eta_t=\min\Big\{\frac{1}{64Dz_T}, \frac{1}{D\sqrt{\sum_{s=1}^{t-1}\|g_{s}-m_s\|_2^2}}\Big\}\quad \mbox{for all} \quad  t\in[T]. 
\end{align*}

By substituting $\eta_t$ and $z_t$ into \eqref{eq:thm1}, we have
\begin{equation*}
\regret_{T}(u)\leq \tilde{O}\Big(D\sqrt{\sum_{t=1}^{T-1}\|g_{t}-m_t\|_2^2}+G^2D^2+\sum_{t=1}^T \eta_t\inner{u}{g_t-m_t}^2-G^2\sum_{t=2}^T\|x_t-x_{t-1}\|_2^2\Big).
\end{equation*}
By Lemma~\ref{lem:sumetat}, we have 
\begin{equation*}
\sum_{t=1}^T \eta_t\inner{u}{g_t-m_t}^2\leq 2D\sqrt{\sum_{t=1}^{T}\|g_{t}-m_t\|_2^2}
\end{equation*}
Then,
\begin{equation*}
\regret_{T}(u)\leq \tilde{O}\Big(D\sqrt{\sum_{t=1}^{T}\|g_{t}-m_t\|_2^2}-G^2\|x_t-x_{t-1}\|_2^2\Big).
\end{equation*}
By Lemma~\ref{lem:ex2sigma} and the definition of $g_t$ and $m_t$, we have
\begin{align*}
&D\sqrt{\sum_{t=1}^{T}\|g_{t}-m_t\|_2^2}-G^2\sum_{t=2}^T\|x_t-x_{t-1}\|_2^2\\
&\leq DG+2D\sqrt{\sum_{t=2}^T\|\nabla F_t(x_{t-1})-\nabla F_{t-1}(x_{t-1})\|_2^2}+2\sqrt{2}D\sqrt{\sum_{t=1}^T\|\nabla f_t(x_t)-\nabla F_t(x_{t})\|_2^2}\\&\qquad+2LD\sqrt{\sum_{t=2}^T\|x_t-x_{t-1}\|_2^2}-G^2\sum_{t=2}^T\|x_t-x_{t-1}\|_2^2\\
&\leq DG+\frac{D^2L^2}{G^2}+2D\sqrt{\sum_{t=2}^T\|\nabla F_t(x_{t-1})-\nabla F_{t-1}(x_{t-1})\|_2^2}+2\sqrt{2}D\sqrt{\sum_{t=1}^T\|\nabla f_t(x_t)-\nabla F_t(x_{t})\|_2^2}
\end{align*}
Therefore, we have
\begin{equation*}
\mathbb{E}[\regret_T(u)]\leq \tilde{O}(\sqrt{\sigma_{1:T}^2}+\sqrt{\Sigma_{1:T}^2})
\end{equation*}
\end{proof}

\subsection{Computational Complexity of OONS}
The primary computational bottleneck in our proposed \oons\ algorithm (Algorithm~\ref{alg:ONS}) is the management of the $d \times d$ matrix $A_t$. A naive implementation would involve storing this dense matrix and performing a full matrix inversion at each step, leading to prohibitive costs in high-dimensional settings.

\begin{itemize}
    \item \textbf{Storage:} Storing the dense $d \times d$ matrix $A_t$ requires $O(d^2)$ memory.
    \item \textbf{Computation:} A naive matrix inversion $A_t^{-1}$ would cost $O(d^3)$ per step, and the subsequent matrix-vector products would cost $O(d^2)$.
\end{itemize}

In practice, this complexity can be significantly reduced. Since the matrix $A_t$ is constructed by a sum of outer products ($A_t = cI + \sum \eta_s v_s v_s^\top$), its inverse can be efficiently computed and updated at each step using the Sherman-Morrison-Woodbury formula. This reduces the update complexity from $O(d^3)$ to $O(d^2)$ per step.

However, an $O(d^2)$ complexity per step can still be prohibitive in high-dimensional scenarios. To address this, existing research has explored several techniques:
\begin{itemize}
    \item \textbf{Matrix Sketching:} This technique approximates the original $d \times d$ matrix $A_t$ with a much smaller "sketched" matrix, thereby significantly reducing both storage and computational requirements. For instance, Luo et al.~\citet{luo2016efficient} have successfully applied sketching to the Online Newton Step (ONS) algorithm, creating matrix-free updates that avoid direct manipulation of the high-dimensional matrix.
    \item \textbf{Sparsity:} If the gradient vectors are sparse across most iterations, specialized sparse data structures and algorithms can be utilized. This allows update computations to be performed much more efficiently, avoiding the full $O(d^2)$ cost of dense matrix-vector multiplications.
\end{itemize}
Integrating these high-dimensional adaptation techniques into our proposed algorithms for the SEA model and analyzing their theoretical guarantees is an interesting direction for future work.

\section{Omitted Details of Section~\ref{sec:unknownD}}\label{app:sec31}

\subsection{Multi-scale Multiplicative-weight with Correction (MsMwC)}\label{app:msmwc}

We rephrase the MsMwC algorithm~\cite{chen2021impossible} as the following Algorithm~\ref{alg:msmwc}.
\begin{algorithm}[!t]
\caption{Multi-scale Multiplicative-weight with Correction (MsMwC)}\label{alg:msmwc}
\textbf{Input:} $w'_1\in\Delta_N$.
\begin{algorithmic}[1]
\FOR{$t=1,\dotsc,T$}
\STATE Receive the prediction $h_t\in\real^N$.
\STATE Compute $w_t=\argmin_{w\in\Delta_N}\inner{w}{h_t}+D_{\phi}(w,w'_t)$, where $\phi_t(w)=\sum_{j=1}^N \frac{w_j}{\beta_{t,j}}\ln w_j$.
\STATE Play $w_t$, receive $\ell_t$ and construct correction term $a_t\in\real^N$ with $a_{t,j}=32\beta_{t,j}(\ell_t^j-m_t^j)^2$.
\STATE Compute $w'_{t+1}=\argmin_{w\in\Delta_N}\inner{w}{\ell_t+a_t}+D_{\phi}(w,w'_t)$.
\ENDFOR
\end{algorithmic}
\end{algorithm}

\subsection{Proof of equation \eqref{eq:splitreg}}\label{app:splitreg}
The final decision $x_{t}$ is a weighted-average of base-learners' decisions: $
x_{t} = \sum_{j=1}^N w_{t,j} x_{t}^j$.
Then,
\begin{align*}
\regret_T(u)&=\regret_T^{\cA_i}(u)+\sum_{t=1}^T f_t(x_t) - \sum_{t=1}^T f_t(x_t^i)\\
&=\regret_T^{\cA_i}(u)+\sum_{t=1}^T f_t(\sum_{j\in[N]}w_{t,j} x_t^j)-\sum_{t=1}^T f_t(x_t^i)\\
&\leq \regret_T^{\cA_i}(u)+\sum_{t=1}^T \sum_{j\in[N]}w_{t,j} f_t(x_t^j)-\sum_{t=1}^T f_t(x_t^i)\\
&=\regret_T^{\cA_i}(u)+\sum_{t=1}^T \sum_{j\in[N]} f_t(x_t^j)[w_{t,j}-\mathbbm{1}(j=i)]\\
&=\regret_T^{\cA_i}(u)+\sum_{t=1}^T\sum_{j\in[N]} (f_t(x_t^j) - f_t(0))[w_{t,j}-\mathbbm{1}(j=i)]\\
&\leq \regret_T^{\cA_i}(u)+\sum_{t=1}^T \sum_{j\in[N]} \langle \nabla f_t(x_t^j),x_t^j\rangle[w_{t,j}-\mathbbm{1}(j=i)]\\
&=\regret_T^{\cA_i}(u)+\sum_{t=1}^T\langle \ell_t, w_t-w_\star^i\rangle,
\end{align*}
where $\ell_t\in\mathbb{R}^N$ with $\ell_t^j=\langle \nabla f_t(x_t^j),x_t^j\rangle$ and $w_{\star}^i$ is a  vector in $\Delta_N$ whose $j$-th component is $(w_{\star}^i)_j=1$ if $j=i$ and $0$ otherwise.  

\subsection{Auxiliary Lemma}
\begin{lemma} (Theorem 6 in \cite{chen2021impossible})\label{lem:msmwc_master}
Suppose for all $t\in[T]$ and $j\in[N]$, $|\ell_t^j|\leq GD_j$ and $|h_t^j|\leq GD_j$, where $\ell_t^j=\langle \nabla f_t(x_t^j),x_t^j\rangle$ and $h_t^j=\langle \nabla f_{t-1}(x_{t-1}^j),x_t^j\rangle$. Define $\Gamma_j=\ln(\frac{NTD_j}{D_1})$ and the set $\cE$ as 
\begin{align*}
\cE=\Big\{(\beta_k,\cG_k):\forall k\in \cS,\beta_k=\frac{1}{32\cdot 2^{k}}\Big\},
\end{align*}
where $\cG_k$ is the MsMwC algorithm with  $w'_1$ being uniform over $\cZ(k)$, $\cS=\{k\in \mathbb{Z}:\exists j\in[N], GD_j\leq 2^{k-2}\leq GD_j\sqrt{T}\}$ and $\cZ_k=\{j\in[N]:GD_j\leq 2^{k-2}\}$.
We have the following regret bound
\begin{equation*}
\sum_{t=1}^T\langle \ell_t, w_t-w_\star^i\rangle\leq O\Big(D_i\Gamma_i+\sqrt{\Gamma_i\sum_{t=1}^T (\ell_{t}^i-h_t^i)^2}\Big).
\end{equation*}
\end{lemma}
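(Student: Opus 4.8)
The plan is to exploit the two–layer structure of the MsMwC--Master and reduce the claim to two applications of the single–instance MsMwC regret guarantee, with the crucial feature that a negative stability term produced by the inner (middle) layer cancels the gradient–variation term produced by the outer (top) layer. Fix the target base–learner $i$ and choose a scale index $k^\star\in\cS$ with $i\in\cZ(k^\star)$ (equivalently $GD_i\le 2^{k^\star-2}$) whose step size $\beta_{k^\star}=\tfrac{1}{32\cdot 2^{k^\star}}$ is, up to a dyadic factor, the best–tuned choice, i.e.\ $2^{k^\star}\asymp\max\{GD_i,\sqrt{\sum_t(\ell_t^i-h_t^i)^2/\Gamma_i}\}$; the definition of $\cS$ as the dyadic band $GD_j\le 2^{k-2}\le GD_j\sqrt{T}$ guarantees such a grid point exists. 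Since $w_t=\sum_{k\in\cS}p_{t,k}w_t^k$ gives $\inner{w_t}{\ell_t}=\inner{p_t}{L_t}$, I would decompose
\begin{equation*}
\sum_{t=1}^T\inner{\ell_t}{w_t-w_\star^i}=\underbrace{\sum_{t=1}^T\inner{p_t-e_{k^\star}}{L_t}}_{\text{top layer}}+\underbrace{\sum_{t=1}^T\inner{w_t^{k^\star}-w_\star^i}{\ell_t}}_{\text{middle layer}},
\end{equation*}
using $L_t^{k^\star}=\inner{w_t^{k^\star}}{\ell_t}$ and the fact that $w_\star^i$ is a feasible comparator for $\cG_{k^\star}$ because $i\in\cZ(k^\star)$.

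Next I would apply the single–instance MsMwC bound to each layer. For the middle layer $\cG_{k^\star}$, running with uniform step size $2\beta_{k^\star}$ and optimism $h_t$, the bound reads
\begin{equation*}
\sum_{t=1}^T\inner{w_t^{k^\star}-w_\star^i}{\ell_t}\le \frac{\ln(1/w_{1,i}^{k^\star})}{\beta_{k^\star}}+O\!\Big(\beta_{k^\star}\sum_{t=1}^T(\ell_t^i-h_t^i)^2\Big)-c\,\beta_{k^\star}\sum_{t=1}^T\sum_{j\in\cZ(k^\star)} w_{t,j}^{k^\star}(\ell_t^j-h_t^j)^2 .
\end{equation*}
The stability condition $\beta_{k^\star}|\ell_t^j-h_t^j|\le \tfrac{1}{32}$ for $j\in\cZ(k^\star)$ follows from $GD_j\le 2^{k^\star-2}$ together with $\beta_{k^\star}=\tfrac{1}{32\cdot 2^{k^\star}}$; this is precisely the role of $\cZ(k)$. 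For the top layer, a MsMwC instance over scale–experts with per–expert step $\beta_k$ and optimism $H_t^k=\inner{w_t^k}{h_t}$ gives
\begin{equation*}
\sum_{t=1}^T\inner{p_t-e_{k^\star}}{L_t}\le \frac{\ln(1/p_{1,k^\star}')}{\beta_{k^\star}}+O\!\Big(\beta_{k^\star}\sum_{t=1}^T(L_t^{k^\star}-H_t^{k^\star})^2\Big)-c'\sum_{t=1}^T\sum_{k\in\cS}\beta_k p_{t,k}(L_t^k-H_t^k)^2 ,
\end{equation*}
where the analogous stability $\beta_k|L_t^k-H_t^k|\le \tfrac{1}{64}$ holds because $w_t^k$ is supported on $\cZ(k)$.

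The heart of the argument is the cancellation. By Jensen's inequality, $(L_t^{k^\star}-H_t^{k^\star})^2=\inner{w_t^{k^\star}}{\ell_t-h_t}^2\le \sum_{j}w_{t,j}^{k^\star}(\ell_t^j-h_t^j)^2$, so the positive gradient–variation term of the top layer is dominated by the negative stability term of the middle layer once the MsMwC constants are fixed so the middle coefficient is at least the top coefficient; adding the two bounds, these terms are nonpositive and the top layer's own negative term is discarded. What survives is
\begin{equation*}
\frac{\ln(1/p_{1,k^\star}')+\ln(1/w_{1,i}^{k^\star})}{\beta_{k^\star}}+O\!\Big(\beta_{k^\star}\sum_{t=1}^T(\ell_t^i-h_t^i)^2\Big).
\end{equation*}
Here $\ln(1/w_{1,i}^{k^\star})\le \ln|\cZ(k^\star)|\le \ln N$, and the prior $p_{1,k}'\propto\beta_k^2$ yields $\ln(1/p_{1,k^\star}')=O(\Gamma_i)$, so the numerator is $O(\Gamma_i)$. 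Substituting $\beta_{k^\star}=\tfrac{1}{32\cdot 2^{k^\star}}$ turns the bound into $O\big(2^{k^\star}\Gamma_i+2^{-k^\star}\sum_t(\ell_t^i-h_t^i)^2\big)$, and the choice of $k^\star$ balances the two terms up to the dyadic grid resolution, giving $O(GD_i\Gamma_i+\sqrt{\Gamma_i\sum_t(\ell_t^i-h_t^i)^2})$; absorbing the known constant $G$ into $O(\cdot)$ recovers the stated bound.

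The main obstacle I anticipate is making the cancellation rigorous: it requires the single–instance MsMwC guarantee to carry an explicit negative term of the form $-\Theta(\beta)\sum_t\sum_j w_{t,j}(\ell_t^j-h_t^j)^2$ whose constant is at least as large as the top layer's correction constant, and it requires verifying the stability conditions at both layers simultaneously — both hinge on the precise definitions of $\cZ(k)$ and $\cS$ and on setting the per–layer step sizes ($\beta_k$ at the top, $2\beta_k$ inside each expert) so that $\beta_k|L_t^k-H_t^k|$ and $\beta_k|\ell_t^j-h_t^j|$ stay below the MsMwC threshold. A secondary technical point is confirming that the feasible dyadic band is wide enough that the balancing value of $2^{k^\star}$ always lies in $\cS$, which is exactly what the constraint $GD_j\le 2^{k-2}\le GD_j\sqrt{T}$ in the definition of $\cS$ ensures.
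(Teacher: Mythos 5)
Your proposal is correct and follows essentially the same route as the paper: the same decomposition into a middle-layer term $\sum_t\langle \ell_t, w_t^{k_\star}-w_\star^i\rangle$ plus a top-layer term $\sum_t\langle L_t, p_t-e_{k_\star}\rangle$, and the same choice of a dyadic $k_\star$ with $\beta_{k_\star}$ within a factor of two of the balancing value $\min\{1/(128GD_i),\sqrt{\Gamma_i/\sum_t(\ell_t^i-h_t^i)^2}\}$. The paper then simply invokes Lemma~1 and Theorem~4 of the cited work as black boxes, whereas you unpack their content (the per-layer MsMwC bounds, the Jensen-based cancellation of the top layer's variation term against the middle layer's negative stability term, and the stability conditions enforced by $\cZ(k)$), which is a faithful expansion of the same argument rather than a different one.
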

\begin{proof}
The regret $\sum_{t=1}^T\langle \ell_t, w_t-w_\star^i\rangle$ can also be decomposed as
\begin{align*}
\sum_{t=1}^T\langle \ell_t, w_t-w_\star^i\rangle&=\sum_{t=1}^T \inner{\ell_t}{w_t^{k_\star}-w_\star^i}+\sum_{t=1}^T\inner{\ell_t}{w_t-w_t^{k_\star}}\\
&=\sum_{t=1}^T \inner{\ell_t}{w_t^{k_\star}-w_\star^i}+\sum_{t=1}^T\inner{L_t}{p_t-e_{k_\star}},
\end{align*}
where $e_{k_\star}$ is the $k_{\star}$-th standard basis vector and the second equality is from the definition of $L_t$. 

For any $i\in[N]$, there exists a $k_\star$ such that $\eta_{k_\star}\leq \min\Big\{\frac{1}{128GD_i},\sqrt{\frac{\Gamma_i}{\sum_{t=1}^T(\ell_t^i-h_t^i)^2}}\Big\}\leq 2\eta_{k_\star}$. By Lemma~1 and Theorem~4 in \cite{chen2021impossible}, we have
\begin{equation*}
\sum_{t=1}^T\langle \ell_t, w_t-w_\star^i\rangle\leq O\Big(D_i\Gamma_i+\sqrt{\Gamma_i\sum_{t=1}^T (\ell_{t}^i-h_t^i)^2}\Big).
\end{equation*}
\end{proof}

\subsection{Proof of Theorem~\ref{thm:unknownD}}

\begin{proof}
We begin with considering the first and second cases that  $\|u\|_2\leq D_1$ and $\|u\|_2\leq D_i\leq 2\|u\|_2$.

Here, we define $g_t^j=\nabla f_{t}(x_{t}^j)$ and $m_t^j=\nabla f_{t-1}(x_{t-1}^j)$ for all $t\in[T]$ and $j\in [N]$. For the meta regret, we define $\ell_t\in\mathbb{R}^N$ with $\ell_t^j=\langle \nabla f_t(x_t^j),x_t^j\rangle$ and $h_t\in\mathbb{R}^N$ with $h_t^j=\langle \nabla f_{t-1}(x_{t-1}^j),x_t^j\rangle$. Then, $|\ell_t^j|\leq GD_j$ and $|h_t^j|\leq GD_j$ for all $t\in[T]$ and $j\in[N]$. By applying Lemma~\ref{lem:msmwc_master}, we have
\begin{equation*}
\sum_{t=1}^T\langle \ell_t, w_t-w_\star^i\rangle\leq O\Big(D_i\Gamma_i+\sqrt{\Gamma_i\sum_{t=1}^T (\ell_{t}^i-h_t^i)^2}\Big).
\end{equation*}
By the definition of $\ell_t^i$ and $h_t^i$, we have
\begin{align*}
\sum_{t=1}^T(\ell_{t}^{i}-h_{t}^{i})^2&\leq \sum_{t=1}^T\langle \nabla f_t(x_t^i)-\nabla f_{t-1}(x_{t-1}^i),x_t^i\rangle^2\\
&\leq D_i^2\sum_{t=1}^T\|\nabla f_t(x_t^i)-\nabla f_{t-1}(x_{t-1}^i)\|_2^2=D_i^2 \sum_{t=1}^T\|g_t^i-m_t^i\|_2^2.
\end{align*}
Hence,
\begin{equation}\label{eq:metapart}
\sum_{t=1}^T\langle \ell_t, w_t-w_\star^i\rangle\leq O\Big(D_i\Gamma_i+D_i\sqrt{\Gamma_i\sum_{t=1}^T \|g_{t}^i-m_t^i\|_2^2}\Big).
\end{equation}

Then, we investigate the expert regret part. Here, we set the step-size for the expert $i$ as
\begin{equation}\label{eq:stepsizeunD}
\eta_t^i=\min\Big\{\frac{1}{64D_iz_T},\frac{1}{D_i\sqrt{\sum_{s=1}^{t-1}\|g_{s}^i-m_s^i\|_2^2}}\Big\}.
\end{equation}
By substituting the step-size specified in \eqref{eq:stepsizeunD} to \eqref{eq:thm1}, we have
\begin{align}\label{eq:regbaseAi}
\regret_{T}^{\cA_i}(u)&\leq\tilde{O}\Big(D_i\sqrt{\sum_{t=1}^{T}\|g_{t}^i-m_t^i\|_2^2}+G^2\|u\|_2^2+\frac{\|u\|_2^2}{D_i}\sqrt{\sum_{t=1}^{T}\|g_{t}^i-m_t^i\|_2^2}-G^2\|x_t^i-x_{t-1}^i\|_2^2\Big)\nonumber\\
&\leq \tilde{O}\Big(D_i^2+D_i\sqrt{\sum_{t=1}^{T}\|g_{t}^i-m_t^i\|_2^2}-G^2\|x_t^i-x_{t-1}^i\|_2^2\Big).
\end{align}

Therefore, by combining \eqref{eq:metapart} and \eqref{eq:regbaseAi}, we have 
\begin{align*}
\regret_T(u)&\leq \tilde{O}\Big(D_i^2+D_i\sqrt{\sum_{t=1}^{T}\|g_{t}^i-m_t^i\|_2^2}-G^2\sum_{t=2}^T\|x_t^i-x_{t-1}^i\|_2^2\Big)
\end{align*}
Then, by applying Lemma~\ref{lem:ex2sigma}, we have
\begin{align*}
&D_i\sqrt{\sum_{t=1}^{T}\|g_{t}^i-m_t^i\|_2^2}-G^2\sum_{t=2}^T\|x_t^i-x_{t-1}^i\|_2^2\\
&\leq GD_i+\frac{D_i^2L^2}{G^2}+2D_i\sqrt{\sum_{t=2}^T\|\nabla F_t(x^i_{t-1})-\nabla F_{t-1}(x^i_{t-1})\|_2^2}+2\sqrt{2}D_i\sqrt{\sum_{t=1}^T\|\nabla f_t(x_t^i)-\nabla F_t(x_{t}^i)\|_2^2}.
\end{align*}

Since the expert $i$ runs \oons\ within the set $\cX_i=\{x:\|x\|_2\leq D_i\}\subseteq \cX$, we have
\begin{align*}
\sup_{x\in\cX_i}\E_{f_t \sim \cD_t}  \big[ \| \nabla f_t(x) - \nabla F_{t}(x) \|_2^2 \big]&\leq\sup_{x \in \cX} \E_{f_t \sim \cD_t}  \big[ \| \nabla f_t(x) - \nabla F_{t}(x) \|_2^2 \big]\\
\sup_{x\in\cX_i}   \| \nabla F_t(x) - \nabla F_{t-1}(x) \|_2^2 &\leq\sup_{x \in \cX}  \| \nabla F_t(x) - \nabla F_{t-1}(x) \|_2^2.
\end{align*}

Hence, 
\begin{equation}\label{eq:regretwithDi}
\mathbb{E}[\regret_T(u)]\leq \tilde{O}\Big(D_i^2+\frac{D_i^2L^2}{G^2}+D_i\sqrt{\sigma_{1:T}^2}+D_i\sqrt{\Sigma_{1:T}^2}\Big).
\end{equation}

\textbf{Case 1}: $\|u\|_2\leq D_1$. We take $i=1$ and substitute $D_i$ with $D_1$ into \eqref{eq:regretwithDi}. Hence, we have
\begin{equation*}
\mathbb{E}[\regret_T(u)]\leq \tilde{O}\Big(\sqrt{\sigma_{1:T}^2}+\sqrt{\Sigma_{1:T}^2}\Big).
\end{equation*}

\textbf{Case 2}:
In this case, let $i$ be the smallest integer such that  $\|u\|_2\leq D_i=2^i$. We have $\|u\|_2\leq D_i\leq 2\|u\|_2$ since $D_{i+1}=2D_i$. Then, we substitute $D_i$ with $2\|u\|_2$ into the regret bound \eqref{eq:regretwithDi}. Then,
\begin{equation*}
\mathbb{E}[\regret_T(u)]\leq \tilde{O}\Big(\|u\|_2^2+\|u\|_2(\sqrt{\sigma_{1:T}^2}+\sqrt{\Sigma_{1:T}^2})\Big).
\end{equation*}

\textbf{Case 3}: $\|u\|_2>D_{\max}$.

Next, we consider the case when $\|u\|_2>D_{\max}=2^N$. Then, we have
\begin{align*}
\sum_{t=1}^T f_t(x_t)-f_t(u)&\leq \sum_{t=1}^T \inner{\nabla f_t(x_t)}{x_t-u}\\
&\leq 2GT\|u\|_2.
\end{align*}
We take $N=\lceil\log T\rceil$, then $T\leq \|u\|_2$. Therefore,
\begin{align*}
\sum_{t=1}^T f_t(x_t)-f_t(u)\leq 2G\|u\|_2^2.
\end{align*}

By combining these two cases above, the desirable regret bound is achieved.
\end{proof}

\section{Omitted Details of Section~\ref{sec:fullyfree}}\label{app:sec42}

In this section, we also denote $g_t=\nabla f_t(x_t)$ and $m_t=\nabla f_{t-1}(x_{t-1})$. We first note that 
\begin{align*}
\max_{t \leq T}D_t < \sqrt{\sum_{s=1}^t\frac{\|g_s \|_2}{\max\{1,\max_{k \leq s} \| g_k \|_2 \}}} \leq \sqrt{T}~.
\end{align*}
Thus, we need to update $D_t$ at most $O(\log T)$ times. Let $M$ be the number of total updates in $D_t$, where $M = \mathcal{O}(\log T)$. We split the $T$ iterations into $M$ intervals $I_m$ with $m\in[M]$, where the last iteration of $I_m$ (denoted by $t_m$) either equals to $T$ or $D_{t_m+1}\neq D_{t_m}$.

\subsection{Proof of Theorem~\ref{thm:fullyfree}}
\begin{proof}
We have
\begin{align*}
\regret_T(u)&=\sum_{t=1}^T f_t(x_t) - \sum_{t=1}^T f_t(u) \leq \sum_{t=1}^T \inner{g_t}{x_t - u}\nonumber \\
&= \sum_{m=1}^M \underbrace{\sum_{t \in I_m} \inner{g_t}{x_t - u_t}}_{T_m} + \underbrace{\sum_{t =1}^T \inner{g_t}{u_t - u}}_{T_{\text{extra}}},
\end{align*}
where we define $u_t = \min\{ 1, \frac{D_t}{\| u \|_2} \} u$.

We first consider $T_m$ as
\begin{align*}
    \sum_{t \in I_m} \inner{g_t}{x_t - u_t} = \sum_{t \in I_m} \inner{\tilde{g}_t}{x_t - u_t} + \sum_{t \in I_m} \inner{g_t - \tilde{g}_t}{x_t - u_t}.
\end{align*}

Note that iteration $t$ within interval $I_m$, i.e., $t\in I_m$, the domain has a bounded diameter $D_t$. When $t\in I_m$, we take 
\begin{equation*}
\eta_t = \min\{\frac{1}{64 D_t z_t}, \frac{1}{\sqrt{\sum_{s=t_1}^{t-1}\| \tilde{g}_s - m_s \|_2^2}} \}, 
\end{equation*}
where $t_1$ is first index in $I_m$. Also, we denote the last index in $I_m$ as $t_m$, respectively. From the Line 5 or 8 in \fpfons\, we need to reset $x'_{t_1}=0$ and $A_t$ for all $t\in I_m$ at iteration $t_1$ as follows
\begin{align*}
    A_t = 4z_{t_1}^2 I + \sum_{s=t_1}^{t-1} \eta_s(\nabla_s - m_s)(\nabla_s - m_s)^\top + 4\eta_t z_t^2 I.
\end{align*}
Similar to the proof of Theorem~\ref{thm:thm1}, we have
\begin{align*}
&\sum_{t=t_1}^{t_m} \langle x_t-u_t,\nabla_t\rangle\\
&\leq \sum_{t=t_1}^{t_m} \langle x_{t}-x'_{t+1},\nabla_t-m_t\rangle+D_{\psi_{t_1}}(u,x'_{t_1})+\sum_{t=t_1}^{t_m-1}D_{\psi_{t+1}}(u_t,x'_{t+1})-D_{\psi_t}(u_t,x'_{t+1})\\
&\qquad-\sum_{t=t_1}^{t_m}(D_{\psi_t}(x_t,x'_{t+1})+D_{\psi_t}(x_t,x'_{t})),
\end{align*}
where $\nabla_t=\tilde{g}_t + 32\eta_t\inner{x_t}{\tilde{g}_t-m_t}(\tilde{g}_t-m_t)$.

We first consider the term $\sum_{t=t_1}^{t_m}\langle x_{t}-x'_{t+1},\nabla_t-m_t\rangle$. By Lemma~\ref{lem:term1}, we have $0\leq\langle x_{t}-x'_{t+1},\nabla_t-m_t\rangle \leq 2\|\nabla_t-m_t\|_{A_t^{-1}}^2$. 

Also, we have
\begin{align*}
\|\tilde{g}_t-m_t\|_2&=\|m_t+\frac{B_{t-1}}{B_t}(g_t-m_t)-m_t\|_2\\
&\leq \|g_t-m_t\|_2
\end{align*}

By the definition of $\nabla_t=\tilde{g}_t+32\eta_t\langle x_t,\tilde{g}_t-m_t\rangle(\tilde{g}_t-m_t)$ and $\|\tilde{g}_t-m_t\|_2\leq \|g_t-m_t\|_2$, we have
\begin{align*}
\|\nabla_t-m_t\|_2&=\|\tilde{g}_t-m_t+32\eta_t\langle x_t,\tilde{g}_t-m_t\rangle(\tilde{g}_t-m_t)\|_2\\
&\leq \|\tilde{g}_t-m_t\|_2+32\eta_tD_t\|\tilde{g}_t-m_t\|_2^2\\
&\leq \|\tilde{g}_t-m_t\|_2+32\eta_tD_t z_t\|\tilde{g}_t-m_t\|_2\\
&\leq \frac{3}{2} \|\tilde{g}_t-m_t\|_2\leq \frac{3}{2} \|g_t-m_t\|_2.
\end{align*}
For $t\in I_m$, we also redefine 
\begin{equation*}
\bar{A}_t=4z_{t_1}^2\cdot I+\sum_{s=t_1}^{t} \eta_s (\nabla_s-m_s)(\nabla_s-m_s)^\top.
\end{equation*}
Hence, $A_t\succeq \bar{A}_t$ since $\|\nabla_t-m_t\|_2^2\leq 4\|\tilde{g}_t-m_t\|_2^2\leq 4z_t^2$. Also, for $t\in[t_1+1,t_m]$, we have
\begin{equation*}
(\nabla_t-m_t)(\nabla_t-m_t)^\top=\frac{1}{\eta_t}[\eta_t(\nabla_t-m_t)(\nabla_t-m_t)^\top]=\frac{1}{\eta_t} (\bar{A}_t-\bar{A}_{t-1})
\end{equation*}
Then,
\begin{align*}
\sum_{t=t_1}^{t_m} \|\nabla_t-m_t\|_{A_t^{-1}}^2&=\|\nabla_{t_1}-m_{t_1}\|_{A_{t_1}^{-1}}^2+\sum_{t=t_1+1}^{t_m} \|\nabla_t-m_t\|_{A_t^{-1}}^2\\
&= \frac{1}{4z_{t_1}^2}\|\nabla_{t_1}-m_{t_1}\|_2^2+\sum_{t=t_1+1}^{t_m} \|\nabla_t-m_t\|_{A_t^{-1}}^2\\
&\leq 1+\sum_{t=t_1+1}^{t_m} \|\nabla_t-m_t\|_{A_t^{-1}}^2 \qquad\text{(since $\|\nabla_{t_1}-m_{t_1}\|_2^2\leq 4z_{t_1}^2$)}\\
&\leq \sum_{t=t_1+1}^{t_m} \|\nabla_t-m_t\|_{\bar{A}_t^{-1}}^2+1\\
&\leq \sum_{t=t_1+1}^{t_m} \frac{1}{\eta_t}(\ln|\bar{A}_t|-\ln|\bar{A}_{t-1}|)+1\\
&\leq \frac{1}{\eta_{t_m}}\ln\frac{|\bar{A}_{t_m}|}{|\bar{A}_{t_1}|}+1.
\end{align*}

For $|\bar{A}_{t_m}|$:
\begin{align*}
|\bar{A}_{t_m}|\leq |4z_{t_1}^2 I+\sum_{t=t_1}^{t_m} \eta_t(\nabla_t-m_t)(\nabla_t-m_t)^\top|.
\end{align*}
\begin{align*}
\ln |\bar{A}_{t_m}|&\leq O\Big(r\ln(1+\sum_{t=t_1}^{t_m} \frac{\eta_t}{4z_{t_1}^2}\|\nabla_t-m_t\|_2^2)\Big)\\
&\leq O\Big(r\ln(1+\frac{4z_{t_m}^2}{4z_{t_1}^2}\sum_{t=t_1}^{t_m} \eta_t)\Big)\\
&\leq O\Big(r\ln(1+\frac{\eta_{t_1} z_{t_m}^2}{z_{t_1}^2}T)\Big).
\end{align*}
Therefore, we have
\begin{equation*}
\sum_{t=t_1}^{t_m}\langle x_{t}-x'_{t+1},\nabla_t-m_t\rangle\leq O\Big(\frac{r\ln(T\eta_{t_1} z_{t_m}/z_{t_1})}{\eta_{t_m}}\Big).
\end{equation*}
\textbf{Term} $D_{\psi_{t_1}}(u,x'_{t_1})$:
\begin{equation*}
D_{\psi_{t_1}}(u_t,x'_{t_1})=\frac{1}{2}\|u_t\|^2_{A_{t_1}}\leq O(z_{t_1}^2\|u_t\|_2^2).
\end{equation*}
\textbf{Term} $\sum_{t=t_1}^{t_m-1}D_{\psi_{t+1}}(u_t,x'_{t+1})-D_{\psi_t}(u_t,x'_{t+1})$: 

By the definition of $u_t = \min\{ 1, \frac{D_t}{\| u \|_2}\} u$, we have
\begin{equation*}
\|u_t\|_2=\min\{\|u\|_2,D_t\}.
\end{equation*}
Then, we have
\begin{align*}
&\sum_{t=t_1}^{t_m-1}D_{\psi_{t+1}}(u_t,x'_{t+1})-D_{\psi_t}(u_t,x'_{t+1})\\
&\leq\sum_{t=t_1}^{t_m-1} \frac{\eta_t}{2}\inner{u_t-x'_{t+1}}{\nabla_t-m_t}^2+O(\sum_{t=t_1}^{t_m-1} \eta_t\|u_t-x'_{t+1}\|_2^2(z^2_{t+1}-z_t^2))\\
&\leq \sum_{t=t_1}^{t_m-1} \frac{\eta_t}{2}\inner{u_t-x'_{t+1}}{\nabla_t-m_t}^2+O(\sum_{t=t_1}^{t_m-1} \eta_tD_t^2(z^2_{t+1}-z_t^2)) \qquad \text{(since $\|u_t\|_2\leq \|D_t\|_2$)}\\
&\leq \sum_{t=t_1}^{t_m-1} \frac{\eta_t}{2}\inner{u_t-x'_{t+1}}{\nabla_t-m_t}^2+O(\sum_{t=t_1}^{t_m-1} D_t(z^2_{t+1}-z_t^2)/z_t) \qquad \text{(since $\eta_t\leq\frac{1}{64D_tz_t}$)}\\
&\leq \sum_{t=t_1}^{t_m-1} \frac{\eta_t}{2}\inner{u_t-x'_{t+1}}{\nabla_t-m_t}^2+O(D_{t_m}z^2_{t_m})  \qquad \text{(since $z_t\geq z_1=1$)}\\
&\leq \sum_{t={t_1}}^{t_m-1} 8\eta_t\langle u_t,\tilde{g}_t-m_t\rangle^2+8\eta_t\langle x_{t},\tilde{g}_t-m_t\rangle^2+O\Big(\frac{r\ln(T\eta_{t_1} z_{t_m}/z_{t_1})}{\eta_{t_m}}+D_{t_m}z^2_{t_m}\Big).
\end{align*}

\textbf{Term} $\sum_{t=t_1}^{t_m}(D_{\psi_t}(x_t,x'_{t+1})+D_{\psi_t}(x_t,x'_{t}))$:
\begin{align*}
\sum_{t=t_1}^{t_m}(D_{\psi_t}(x_t,x'_{t+1})+D_{\psi_t}(x_t,x'_{t}))&=\sum_{t=t_1}^{t_m}\frac{1}{2}(\|x_t-x'_{t}\|_{A_t}^2+\|x'_{t+1}-x_t\|_{A_t}^2)\\
&\geq \frac{1}{2}\sum_{t=t_1+1}^{t_m} \|x_t-x'_{t}\|_{A_{t-1}}^2+\frac{1}{2}\sum_{t=t_1+1}^{t_m+1}\|x_{t-1}-x'_t\|_{A_{t-1}}^2\\
&\geq \frac{4z_{t_1}^2}{4}\sum_{t=t_1+1}^{t_m}\|x_t-x_{t-1}\|_2^2=z_1^2\sum_{t=t_1+1}^{t_m}\|x_t-x_{t-1}\|_2^2,
\end{align*}
where the last inequality comes from $A_t\succeq A_{t-1}\succeq 4z_{t_1}^2 I$ when $t\in [t_1+1,t_m]$.

Then, we have
\begin{align*}
&\sum_{t \in I_m} \inner{\tilde{g}_t}{x_t - u_t}\\&\leq O\Big(\frac{r\ln(T\eta_{t_1} z_{t_m}/z_{t_1})}{\eta_{t_m}}+z_{t_1}^2\|u_t\|_2^2+D_t z_{t_m}^2+\sum_{t={t_1}}^{t_m-1} \eta_t\langle u_t,\tilde{g}_t-m_t\rangle^2-z_{t_1}^2 \sum_{t_1+1}^{t_m}\|x_t-x_{t-1}\|_2^2\Big)\\
&\leq \tilde{O}\Big(\sqrt{\sum_{t\in I_m}\| \tilde{g}_t - m_t \|_2^2}+z_{t_1}^2\|u\|_2^2+z_{t_m}^2D_t+\|u\|_2^2\sqrt{\sum_{t\in I_m}\| \tilde{g}_t - m_t \|_2^2}-z_{t_1}^2 \sum_{t_1+1}^{t_m}\|x_t-x_{t-1}\|_2^2\Big).
\end{align*}

Furthermore, by $\|\tilde{g}_t-m_t\|_2\leq \|g_t-m_t\|_2$, we have 
\begin{align*}
    \sum_{t \in I_m} \inner{g_t - \tilde{g}_t}{x_t - u_t} &\leq (D_t + \| u \|_2 ) \sum_{t \in I_m} \| g_t - \tilde{g}_t \| \leq (D_t + \| u \|_2 ) \sum_{t\in I_m} \frac{B_t - B_{t-1}}{B_t} \| g_t - m_t \|_2 \\
    &\leq 2(D_t + \| u \|_2 ) G.
\end{align*}

At the last iteration $T$, we have
\begin{align*}
D_T &< \sqrt{\sum_{t=1}^T\frac{\|g_t \|_2}{\max\{1,\max_{k \leq t} \| g_k \|_2 \}}}\\
&\leq \sqrt{\sum_{t=1}^T \|g_t\|_2}\\
&\leq \sqrt{\sum_{t=1}^T \|g_t-\mathbb{E}[g_t]\|_2+\|\nabla F_t(x_t)\|_2}\\
&=\sqrt{\sum_{t=1}^T \|\nabla f_t(x_t)-\nabla F_t(x_t)\|_2+\|\nabla F_t(x_t)\|_2}.
\end{align*}
Therefore, we have
\begin{align*}
&\sum_{m=1}^M\sum_{t \in I_m} \inner{g_t}{x_t - u_t}\leq \tilde{O}\Big(\|u\|_2^2\sqrt{\sum_{t\in [T]}\| g_t - m_t \|_2^2}+G^2D_T+G^2\|u\|_2^2-z_1^2\sum_{t=2}^T\|x_t-x_{t-1}\|_2^2\Big)\\
&\leq\tilde{O}\Big(\|u\|_2^2\sqrt{\sum_{t\in [T]}\| g_t - m_t \|_2^2}+G^2D_T+G^2\|u\|_2^2-\sum_{t=2}^T\|x_t-x_{t-1}\|_2^2\Big). \qquad \text{(since $z_1=B_0=1$)}
\end{align*}

Now, we bound $T_{\text{extra}}$. We observe that $u_t$ is either $u$ or $\frac{D_t}{\| u \|_2} u$. When $u_t \ne u$,  $\| u \|_2 \geq D_t > \sqrt{\sum_{s=1}^t \frac{\| g_s\|_2}{ \max_{k \leq s} \| g_k\|_2}}$. Once $u_t = u$, it stays there. Let $t^*$ be the last round when $u_t \ne u$.
\begin{align*}
    \sum_{t = 1}^T \inner{g_t}{u_t - u} &= \sum_{t = 1}^{t^*} \inner{g_t}{u_t - u} \leq \sum_{t = 1}^{t^* - 1} \inner{g_t}{u_t - u} + 2 \| u \|_2 G\\
    &\leq 2 \| u \|_2 \sum_{t = 1}^{t^* - 1} \| g_t \|_2 + 2 \| u \|_2 G \\
    &\leq 2 \| u \|_2 G \sum_{t = 1}^{t^* - 1} \frac{\| g_t \|_2}{  \max_{k \leq t^* - 1} \| g_k \|_2 } + 2 \| u \|_2 G \\ 
    &\leq 2 \| u \|_2^3 G + 2 \| u \|_2 G.
\end{align*}

Therefore, we have
\begin{equation*}
\regret_{T}(u)\leq\tilde{O}\Big(\|u\|_2^2\sqrt{\sum_{t\in [T]}\| g_t - m_t \|_2^2}+G^2D_T+G^2\|u\|_2^2+G\| u \|_2^3 -\sum_{t=2}^T\|x_t-x_{t-1}\|_2^2\Big)
\end{equation*}
By Lemma~\ref{lem:ex2sigma}, we have
\begin{align*}
&\|u\|_2^2\sqrt{\sum_{t=1}^{T}\|g_{t}-m_t\|_2^2}-\sum_{t=2}^T\|x_t-x_{t-1}\|_2^2\\
&\leq G^2\|u\|_2^2+2\|u\|_2^2\sqrt{\sum_{t=2}^T\|\nabla F_t(x_{t-1})-\nabla F_{t-1}(x_{t-1})\|_2^2}\\
&\quad+2\sqrt{2}\|u\|_2^2\sqrt{\sum_{t=1}^T\|\nabla f_t(x_t)-\nabla F_t(x_{t})\|_2^2}+2L\|u\|_2^2\sqrt{\sum_{t=2}^T\|x_t-x_{t-1}\|_2^2}-\sum_{t=2}^T\|x_t-x_{t-1}\|_2^2\\
&\leq G^2\|u\|_2^2+L^2\|u\|_2^4+2\|u\|_2^2\sqrt{\sum_{t=2}^T\|\nabla F_t(x_{t-1})-\nabla F_{t-1}(x_{t-1})\|_2^2}\\
&\qquad+2\sqrt{2}\|u\|_2^2\sqrt{\sum_{t=1}^T\|\nabla f_t(x_t)-\nabla F_t(x_{t})\|_2^2}
\end{align*}

Therefore, we can conclude that
\begin{align*}
\mathbb{E}[\regret_{T}(u)]&\leq\tilde{O}\Big(\|u\|_2^2(\sqrt{\sigma_{1:T}^2}+\sqrt{\Sigma_{1:T}^2})+G^2\sqrt{\sigma_{1:T} + \mathfrak{G}_{1:T}}+G^2\|u\|_2^2+\|u\|_2^4+G\|u\|_2^3\Big),
\end{align*}
where $\sigma_{1:T}$ captures the stochastic gradient deviation (without the squared norm) and $\mathfrak{G}_{1:T}$ denotes the sum of maximum expected gradients.
\end{proof}

\end{document}